\newcommand\theTitle{Estimation from Indirect Supervision with Linear Moments}
\icmltitlerunning{\theTitle}
\newcommand\sF{\ensuremath{\mathcal{F}}}
\newcommand\sI{\ensuremath{\mathcal{I}}}
\newcommand\sN{\ensuremath{\mathcal{N}}}
\newcommand\sO{\ensuremath{\mathcal{O}}}
\newcommand\sP{\ensuremath{\mathcal{P}}}
\newcommand\sX{\ensuremath{\mathcal{X}}}
\newcommand\sY{\ensuremath{\mathcal{Y}}}
\newcommand\BP{\ensuremath{\mathbb{P}}}
\newcommand\Fig[4]{\begin{figure}[ht] \begin{center} \includegraphics[scale=#2]{#1} \end{center} \vspace{-1.0em} \caption{\label{fig:#3} #4} \vspace{-.5em} \end{figure}}
\newcommand\FigTop[4]{\begin{figure}[t] \begin{center} \includegraphics[scale=#2]{#1} \end{center} \vspace{-1.0em} \caption{\label{fig:#3} #4} \vspace{-.5em} \end{figure}}
\DeclareMathOperator*{\tr}{tr}
\DeclareMathOperator*{\cov}{Cov} 
\DeclareMathOperator*{\diag}{diag} 
\newcommand\p[1]{\ensuremath{\left( #1 \right)}} 
\newcommand\pb[1]{\ensuremath{\left[ #1 \right]}}
\newcommand\inv[1]{\ensuremath{\frac{1}{#1}}}
\newcommand\half{\ensuremath{\frac{1}{2}}}
\newcommand\R{\ensuremath{\mathbb{R}}}
\newcommand\eqdef{\ensuremath{\stackrel{\rm def}{=}}} 
\newcommand{\1}{\mathbb{I}}
\newcommand\refeqn[1]{(\ref{eqn:#1})}
\newcommand\refeqns[2]{(\ref{eqn:#1}) and (\ref{eqn:#2})}
\newcommand\refsec[1]{Section~\ref{sec:#1}}
\newcommand\reffig[1]{Figure~\ref{fig:#1}}
\newcommand\reftab[1]{Table~\ref{tab:#1}}
\newcommand\refprop[1]{Proposition~\ref{prop:#1}}
\newcommand\cvd{\ensuremath{\xrightarrow{d}}} 
\newcommand\cvP{\ensuremath{\xrightarrow{P}}}
\newcommand{\E}{\ensuremath{\mathbb{E}}} 
\newcommand\KL[2]{\ensuremath{\text{KL}\left( #1 \| #2 \right)}}
\newcommand\posstart{j_\text{start}}
\newcommand\posend{j_\text{end}}
\newcommand\eg{\textit{e.g.}\ }
\newcommand\itr[1]{{({#1})}}
\newcommand\mX{\mathcal{X}}
\newcommand\mY{\mathcal{Y}}
\newcommand\mO{\mathcal{O}}
\newcommand\mG{\mathcal{G}}
\newcommand \MargML {\hat\theta_{\text{\rm marg}}}
\newcommand \MomML {\hat\theta_{\text{\rm mom}}}
\newcommand \vMargML {\Sigma_{\text{\rm marg}}}
\newcommand \vMomML {\Sigma_{\text{\rm mom}}}
\newcommand\pervalue{\text{\rm pv}}
\newcommand\coordrel{\text{\rm cr}}
\newcommand\eff{\text{\rm Eff}}
\newcommand{\vecsquare}{^{\otimes 2}}
\newcommand{\diffp}{\alpha}
\newcommand{\diameter}{\delta}
\newcommand\lone[1]{\|{#1}\|_1}
\newcommand{\yell}[1]{\textbf{#1} \;\; }
\newcommand{\argmax}{\mathop{\rm arg\hspace{.1em}max}}
\newcommand{\argmin}{\mathop{\rm arg\hspace{.1em}min}}
\newcommand\SpaceFig[4]{\begin{figure}[ht] \vspace{-1.0em} \begin{center} \includegraphics[scale=#2]{#1} \end{center} \vspace{-2.0em} \caption{\label{fig:#3} #4} \vspace{-1.0em} \end{figure}}
\newcommand\SpaceFigTop[4]{\begin{figure}[t] \begin{center} \includegraphics[scale=#2]{#1} \end{center} \vspace{-1.0em} \caption{\label{fig:#3} #4} \vspace{-1.5em} \end{figure}}
\begin{document} 

\twocolumn[
\icmltitle{\theTitle}

\icmlauthor{Aditi Raghunathan}{aditir@stanford.edu}
\icmlauthor{Roy Frostig}{rf@cs.stanford.edu}
\icmlauthor{John Duchi}{jduchi@stanford.edu}
\icmlauthor{Percy Liang}{pliang@cs.stanford.edu}
\icmladdress{Stanford University, Stanford, CA}

\icmlkeywords{estimation, indirect supervision, method of moments,
  graphical models, latent variable models, structured prediction,
  local privacy}

\vskip 0.3in
]

\begin{abstract} 
In structured prediction problems where we have indirect supervision of the
output, maximum marginal likelihood faces two computational obstacles:
non-convexity of the objective and intractability of even a single gradient
computation.  In this paper, we bypass both obstacles for a class of what we
call linear indirectly-supervised problems.  Our approach is simple: we
solve a linear system to estimate sufficient statistics of the model,
which we then use to estimate parameters via convex optimization.  We
analyze the statistical properties of our approach and show empirically that
it is effective in two settings: learning with local privacy constraints
and learning from low-cost count-based annotations.\footnote{
  This is an extended and updated version of our paper
  in \textit{Proceedings of the $\mathit{33}^{rd}$
    International Conference on Machine Learning}, New York, NY, USA,
  2016. JMLR: W\&CP volume 48.  Copyright 2016 by the author(s).}

\end{abstract} 

\section{Introduction}
\label{sec:intro}

Consider the problem of estimating a probabilistic graphical model from
\textit{indirect supervision}, where only a partial view of the variables is
available.  We are interested in indirect supervision for two reasons:
first, one might not trust a data collector and wish to use privacy
mechanisms to reveal only partial information about sensitive data
\citep{warner1965randomized, evfimievski2004privacy, dwork2006calibrating,
  duchi2013local}.  Second, if data is generated by human annotators, say in
a crowdsourcing setting, it can often be more cost-effective to solicit
lightweight annotations
\citep{maron1998framework,mann08ge,quadrianto08labels,liang09measurements}.
In both cases, we trade statistical efficiency for another
resource: privacy or annotator cost.

Indirect supervision is naturally handled by defining a latent-variable model
where the structure of interest is treated as a latent variable.
While statistically sensible,
learning latent-variable models poses two computational challenges.
First, maximum marginal likelihood requires non-convex optimization,
where popular procedures such as gradient descent or Expectation Maximization (EM)
\citep{demp1977em} are only guaranteed to converge to local optima.
Second, even the computation of the gradient or performing the E-step can be intractable,
requiring probabilistic inference on a loopy graphical model induced by the indirect supervision
\citep{chang07constraint,graca08em,liang09measurements}.

In this paper,
we propose an approach that bypasses both computational obstacles
for a class which we call \emph{linear indirectly-supervised} learning problems.
We lean on the method of moments \citep{pearson1894contributions},
which has recently led to advances in learning latent-variable models
\citep{hsu09spectral,anandkumar13tensor,chaganty2014graphical},
although we do not appeal to tensor factorization.
Instead, we express indirect supervision as a linear combination of the
sufficient statistics of the model,
which we recover by solving a simple noisy linear system.
Once we have the sufficient statistics, we use convex optimization to
solve for the model parameters.
The key is that while supervision per example is indirect and leads to intractability,
\emph{aggregation} over a large number of examples renders the problem tractable.

While our moments-based estimator yields computational benefits,
we suffer some statistical loss relative to maximum marginal likelihood.
In \refsec{variance}, we compare the asymptotic variance of marginal-likelihood and moment-based
estimators, and provide some geometric insight
into their differences in \refsec{two_step}.
Finally, in \refsec{experiments}, we apply our framework empirically to
our two motivating settings:
(i) learning a regression model under local privacy constraints,
and (ii) learning a part-of-speech tagger with lightweight annotations.
In both applications, we show that our moments-based estimator obtains good accuracies.

\section{Setup}
\label{sec:setup}

\Fig{graphics/figures_graphicalModel}{0.4}{graphicalModel}{
We solve a structured prediction problem from $x$ to $y$.
During training, we observe not $y$, but indirect supervision $o$.}

\paragraph{Notation.}

We use superscripts to enumerate instances in a data sample (\eg $x^\itr{1},
\dots, x^\itr{n}$), and square-bracket indexing to enumerate components of a
vector or sequence: $x[b]$ denotes the component(s) of $x$ associated with
$b$.  For a real vector $x \in \R^d$, we let $x\vecsquare
\eqdef x x^\top$.
 
\paragraph{Model.}
Consider the structured prediction task of mapping an input $x \in
\mathcal{X}$ to some output $y \in \mathcal{Y}$.  We model this mapping
using a conditional exponential family
\begin{align}
  \label{eqn:exp-fam}
  p_\theta(y \mid x) = \exp\{\phi(x,y)^\top \theta - A(\theta; x)\},
\end{align}
where $\phi : \sX \times \sY \to \R^d$ is the feature mapping, $\theta \in
\R^d$ is the parameter vector, and $A(\theta; x) = \log \sum_{y \in \mY}
\exp\{\phi(x,y)^\top \theta\}$ is the log-partition function.  For
concreteness, we specialize to conditional random fields (CRFs)
\citep{lafferty01crf} over collections of $K$-variate labels, where $x =
(x[1], \dots, x[L])$ and $y = (y[1], \dots, y[L]) \in [K]^L$; here $L$ is
the number of variables and $K$ is the number of possible labels per
variable. We let $C \subseteq 2^{[L]}$ be the set of cliques in the CRF, so
that the features decompose into a sum over cliques: $\phi(x, y) = \sum_{c
  \in C} \phi_c(y[c], x, c)$.  As one particular example, if $C$ consists of
all nodes $\{ \{1\}, \dots, \{L\} \}$ and edges between adjacent nodes $\{
\{1, 2\}, \dots, \{L-1, L\} \}$, the CRF is chain-structured.

\paragraph{Learning from indirect supervision.}
In the \emph{indirectly supervised} setting that is the focus of this paper,
we do not have access to $y$ but rather only observations $o \in
\mathcal{O}$, where $o$ is drawn from a known \emph{supervision distribution}
$S(o \mid y)$.

For each $i = 1, \dots, n$, let $(x^{(i)}, y^{(i)})$ be drawn from some
unknown data-generating distribution $p^*$ (by default, we do not assume the
CRF is well-specified), and $o^{(i)}$ is drawn according to $S(\cdot \mid
y^{(i)})$ as in \reffig{graphicalModel}. The learning problem is then the
natural one: given the training examples $(x^{(1)}, o^{(1)}), \dots,
(x^{(n)}, o^{(n)})$, we wish to produce an estimate $\hat\theta$ for
the model~\eqref{eqn:exp-fam}.

\paragraph{Maximum marginal likelihood.}

The classic paradigm is to maximize the marginal likelihood:
\begin{align}
  \label{eqn:margML}
  \MargML \eqdef \argmax_\theta \hat\E\pb{\log \sum_{y \in \sY} S(o \mid y) p_\theta(y \mid x)},
\end{align}
where $\hat\E$ denotes an expectation over the training sample.  While
$\MargML$ is often statistically efficient, there are two computational
difficulties associated with this approach:

\begin{enumerate} 
\setlength\itemsep{0em}
\item The log-likelihood objective \refeqn{margML} is typically non-convex,
so computing $\MargML$ exactly is in general intractable; see Section~\ref{sec:two_step} for a
more detailed discussion.
Local algorithms like Expectation Maximization \citep{demp1977em}
are only guaranteed to converge to local optima.

\item Computing the gradient or the E-step requires computing
$p(y \mid x, o) \propto p_\theta(y \mid x) S(o \mid y)$,
which is intractable, not due to the model $p_\theta$, but to the
supervision $S$.
This motivates a number of relaxations
\citep{graca08em,liang09measurements,steinhardt2015relaxed},
but there are no guarantees on approximation quality.
\end{enumerate}

\paragraph{Our approach: moment-based estimation.}

\FigTop{graphics/figures_twoStep}{0.35}{twoStep}{ Our approach is to (i) solve
  a linear system based on the data $\{ (x^{(i)}, o^{(i)}) \}$ to estimate
  the sufficient statistics $\hat\mu$, then (ii) use convex
  optimization to estimate the model parameters $\hat\theta$.  }

We present a simple approach to circumvent the above issues for a
restricted class of models, in the same vein as \citet{chaganty2014graphical}.
To begin, consider the fully-supervised
setting, where we observe examples $\{(x^{(i)}, y^{(i)})\}$.  In this case,
we could maximize the likelihood $\hat\E[\log p_\theta(y \mid x)]$,
solving $\argmax_\theta \{\hat\mu^\top \theta - \hat\E[A(\theta; x)]\}$,
where $\hat\mu \eqdef \hat\E[\phi(x,y)] \in \R^d$ are the sufficient
statistics, which converge to $\mu^* \eqdef \E[\phi(x,y)]$.  Therefore, if
we could construct a consistent estimate $\hat\mu$ of $\mu^*$, then we could
solve the same convex optimization problem used in the fully-supervised
estimator.

Of course, we do not have access to $\hat\E[\phi(x,y)]$.  Instead, in our
(linearly) indirectly supervised setting, we are able to define an
\emph{observation function} $\beta(x, o) \in \R^d$ which is nonetheless in
expectation equal to the population sufficient statistics:
\begin{align}
  \label{eqn:betaGivesMu}
  \E[\beta(x, o) \mid x] = \phi(x, y), \quad
  \E[\beta(x, o)] = \mu^*.
\end{align}
In general, we construct $\beta(x, o)$ by solving a linear system.
Putting the pieces together yields our estimator (\reffig{twoStep}):
\begin{enumerate} 
\setlength\itemsep{0pt}
\item Sufficient statistics: $\hat\mu = \hat\E[\beta(x, o)]$.
\item Parameters: $\MomML \eqdef \arg\max_\theta \{\hat\mu - \hat\E[A(\theta; x)]\}$.
\end{enumerate}
In the next two sections, we describe the observation function $\beta(x,o)$ for
learning with local privacy (\refsec{localPrivacy}) and lightweight annotations (\refsec{annotations}).

\section{Learning under local privacy}
\label{sec:localPrivacy}

Suppose we wish to estimate a conditional distribution $p_\theta(y \mid x)$,
where $x$ is non-sensitive information about an individual and $y$ contains
sensitive information (for example, income or disease status). Individuals,
because of a variety of reasons---mistrust, embarrassment, fear of
discrimination---may wish to keep $y$ private and instead release some $o \sim S(\cdot \mid y)$.
To quantify the amount of privacy afforded by $S$, we turn to the
literature on privacy in databases and theoretical computer
science~\citep{evfimievski2004privacy,dwork2006calibrating} and say that $S$ is
$\diffp$-differentially private if any two $y,y'$ have comparable probability (up to a factor of $\exp(\alpha)$)
of generating $o$:
\begin{align}
  \sup_{o,y,y'} \frac{S(o \mid y)}{S(o \mid y')} \le \exp(\diffp).
\end{align}
What $S$ should we employ?
We first explore the classical randomized response (RR) mechanism (\refsec{classicRR}),
and then develop a new mechanism that leverages the graphical model structure (\refsec{structuredRR}).

\subsection{Classic randomized response}
\label{sec:classicRR}

\citet{warner1965randomized} proposed the
now-classical randomized response technique,
which proceeds as follows:~For some fixed (generally
small) $\epsilon > 0$, the respondent reveals $y$ with probability
$\epsilon$ and with probability $1 - \epsilon$ draws a sample from a (known)
base distribution $u$---generally uniform---over $\mY$. Formally,
the classical randomized response supervision is
\begin{align}
  \label{eqn:rr}
  S(o \mid y) \eqdef \epsilon \1[o = y] + (1 - \epsilon) u(o).
\end{align}

\paragraph{Estimation.}

Our goal is to construct a function $\beta$ satisfying \refeqn{betaGivesMu}.
Towards that end, let us start with what we can estimate and expand based on \refeqn{rr}:
\begin{align}
  \label{eqn:rrExpected}
  \E[\phi(x, o)] = \epsilon \E[\phi(x,y)] + (1 - \epsilon) \E[\phi(x, y')],
\end{align}
where $y' \sim u$.  Rearranging \refeqn{rrExpected}, we see that we can solve
for $\mu^* = \E[\phi(x, y)]$.
Indeed, if we define the observation function:
\begin{align}
  \label{eqn:beta-privacy}
  \beta(x, o) \eqdef
  \frac{\phi(x, o) - (1 - \epsilon) \E[\phi(x, y') \mid x]}{\epsilon},
\end{align}
we can verify that $\E[\beta(x, o)] = \mu^*$.

\paragraph{Privacy.}

We can check that the ratio $S(o \mid y) / S(o \mid y') \le 1 +
\frac{\epsilon + (1 - \epsilon) u(o)}{(1 - \epsilon) u(o)}$, so classical
randomized response is $\frac{\epsilon}{(1 - \epsilon) \min_o
  u(o)}$-differentially private.  For any distribution $u$, this value is at
least $\frac{\epsilon}{1 - \epsilon} |\sY|$, a linear dependence on $|\sY|$.
In classical randomized response settings, $|\sY| = 2$, which is unproblematic.
In contrast, in structured
prediction settings, the number of labels is exponential in the
number of variables ($|\sY| = K^L$),
so we must take $\epsilon = O\p{\frac{\diffp}{|\mY|}}$.
The asymptotic variance of $\MomML$
scales as $\epsilon^{-2}$ (as will be shown in \refsec{variance}),
which makes classical randomized response unusable for structured prediction.

\subsection{Structured randomized response}
\label{sec:structuredRR}

With this difficulty in mind, we recognize that we must somehow leverage the
structure of the sufficient statistics vector $\phi(x, y)$ to perform
estimation.
In particular, we show that the supervision should only depend on the
sufficient statistics:

\begin{proposition}
  \label{prop:sufficient_channel}
  Let $\mO$ be the set in which observations live.
  For any privacy mechanism $S(o \mid x, y)$ that is $\diffp$-differentially
  private, there exists a mechanism $S'(o' \mid \phi(x,y))$
  that is at least $\diffp$-differentially private, and for
  any set $A \subseteq \mO$, we have
  \begin{align}
    \label{eqn:preserveXO}
    \BP[o' \in A \mid x] = \BP[o \in A \mid x],
  \end{align}
  where $o' \sim S'(\cdot \mid \phi(x, y))$ and $o \sim S(\cdot \mid x, y)$.
\end{proposition}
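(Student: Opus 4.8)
The plan is to realize $S'$ as a \emph{fiber average} of $S$ over the level sets of the sufficient statistic, and then verify the two requirements—marginal matching and privacy—separately. Fix $x$ (which is non-sensitive, so $S'$ is allowed to depend on it; the privacy constraint concerns only the sensitive label $y$), and for each value $s$ in the range of $\phi(x,\cdot)$ define
\begin{align}
  S'(o' \mid s) \eqdef \E\left[\, S(o' \mid x, y) \;\middle|\; \phi(x,y) = s \,\right],
\end{align}
where the conditional expectation is over $y \sim p^*(\cdot \mid x)$ restricted to the fiber $\{y : \phi(x,y) = s\}$. By construction this is a valid conditional distribution over $o'$ that reads $y$ only through the value $s = \phi(x,y)$, which is exactly the structural property we are after. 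On fibers of zero probability we may define $S'(\cdot \mid s)$ arbitrarily, as this affects none of the statements below.

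First I would establish the marginal-matching identity \refeqn{preserveXO}, a direct application of the tower property. Grouping the labels $y$ by the value $s = \phi(x,y)$ and averaging, for any $A \subseteq \mO$,
\begin{align}
  \BP[o' \in A \mid x]
  &= \sum_s \BP[\phi(x,y) = s \mid x]\; S'(A \mid s) \\
  &= \E\left[\, S(A \mid x, y) \mid x \,\right]
   = \BP[o \in A \mid x],
\end{align}
since the inner conditional expectation defining $S'(A \mid s)$ re-assembles into the full conditional expectation over $y$ given $x$. This step uses nothing about $p^*$ beyond its existence, so it is consistent with not assuming the CRF is well-specified.

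The second and more delicate step is the privacy bound. Writing $S'(o' \mid s) = \sum_y w_s(y)\, S(o' \mid x, y)$ as a convex combination with weights $w_s(y) = \BP[\,y \mid x, \phi(x,y) = s\,] \ge 0$, I would invoke the elementary fact that if $S(o' \mid x, y) \le \exp(\diffp)\, S(o' \mid x, y')$ for \emph{all} $y, y'$—precisely the $\diffp$-differential privacy of $S$ at the fixed $x$—then for any two statistic values $s, s'$ the ratio of the corresponding convex combinations is again bounded by $\exp(\diffp)$: each numerator term is at most $\exp(\diffp)$ times each denominator term, hence so is the weighted numerator relative to the weighted denominator. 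Therefore $\sup_{o', s, s'} S'(o' \mid s) / S'(o' \mid s') \le \exp(\diffp)$, so $S'$ is $\diffp$-differentially private, and since averaging can only contract the range of these likelihood ratios, $S'$ is in fact at least as private as $S$.

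The main obstacle I anticipate is conceptual rather than computational: pinning down precisely what it means for $S'$ to depend on $y$ only through $\phi(x,y)$ while still being permitted to use $x$. The resolution is that privacy is asymmetric here—$x$ is public while $y$ is sensitive—so the fiber average at fixed $x$ is legitimate, and the privacy comparison above is exactly a comparison at a common $x$, which is where the $\diffp$-DP hypothesis on $S$ bites. A secondary, purely technical point is that if $\mO$ or the fibers are continuous one should read the fiber average through a regular conditional distribution; this leaves the two displayed computations unchanged.
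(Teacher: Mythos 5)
Your proof is correct, and its core construction---defining $S'(\cdot \mid s)$ as a mixture of $S(\cdot \mid x,y)$ over the fiber $\{y : \phi(x,y) = s\}$---is the same mixture the paper uses; the differences are in the mixing weights and in completeness, and they cut both ways. The paper invokes the Fisher--Neyman factorization $p_\theta(y \mid x) = Q(y \mid \phi(x,y))\, F_\theta(\phi(x,y) \mid x)$ and mixes with the $\theta$-free kernel $Q$; sufficiency is exactly what makes $Q$ known to the mechanism designer (for the model~\eqref{eqn:exp-fam} it is uniform on each fiber), so the paper's $S'$ is implementable without knowledge of $\theta$ or $p^*$, which is the point of the proposition in the paper's narrative: a data collector may restrict attention to statistic-based mechanisms. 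Your weights $w_s(y) = \BP[y \mid x, \phi(x,y)=s]$ are instead taken under the unknown $p^*$; this makes the marginal-matching identity \refeqn{preserveXO} hold exactly under the data-generating distribution with no sufficiency or well-specification assumption whatsoever (your argument works verbatim for an arbitrary function $\phi$), but the mechanism you exhibit cannot be run by anyone who does not know $p^*$, so you prove the literal existence statement while losing the constructive content that motivates the proposition. (Conversely, the paper's $Q$-mixture matches marginals under $p^*$ only when the $p^*$-conditional of $y$ on each fiber agrees with $Q$, e.g.\ in the well-specified case, so on this point your version is the more faithful reading of the statement.) Finally, your convex-combination bound $\min_y S(o' \mid x,y) \le S'(o' \mid s) \le \max_y S(o' \mid x,y)$, which gives the ratio bound $\exp(\diffp)$ and the ``at least as private'' claim, addresses the half of the proposition that the paper's printed proof omits entirely---that proof ends after asserting marginal matching ``by substitution and algebra,'' and privacy of only the two concrete schemes is argued separately in the appendix---so your treatment of the privacy half is a genuine completion rather than a restatement.
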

In short, we can always construct $S'$ that only uses the
sufficient statistics $\phi(x, y)$ but yields the same joint distribution over
the pairs $(x, o)$.  Furthermore, $S'$ is \emph{at least} as private as the original
mechanism $S$. See Appendix~\ref{sec:privacy_proofs} for a proof.

This motivates a focus exclusively on mechanisms that use sufficient
statistics, and in particular, we consider the following two
\emph{structured randomized response} mechanisms.  Our schemes are
both two-phase procedures that first binarize the sufficient
statistics, and then release a set of observations inspired by
\citeauthor{duchi2013local}'s minimax optimal approach to estimating a
multinomial distribution.
For $t > 0$, let $q_t \eqdef \frac {e^{t/2}} {1 + e^{t/2}}$. Assume
each coordinate of the statistics $\phi$ lies in the interval $[0, c]$
for some positive scalar $c$. For $i \in \{1,\dots,d\}$, draw $\tilde
o[i]$ as a Bernoulli variable with bias $\tfrac 1 c
\phi(x,y)[i]$. Then:

\yell{(Coordinate release)}
  Draw a coordinate $j \in \{1,\dots,d\}$ from a distribution
  $p_\coordrel$. Set $o_\coordrel = \tilde o[j]$ with probability
  $q_\diffp$, otherwise $o_\coordrel = 1 - \tilde o[j]$. Release the
  pair $(j, o_\coordrel)$.

\yell{(Per-value $\phi$-RR)}
  Denote by $\Omega(x,y)$ the support of $\tilde o$ given $x,y$, let
  $\diameter \eqdef \sup_{x,y,\tilde o \in \Omega(x,y)} \lone{\tilde
    o}$, and take any $\bar \diameter \ge \diameter$. For $j = 1,
  \dots, d$, set $o_\pervalue[j] = \tilde o[j]$ with probability
  $q_{\diffp/\bar\diameter}$, otherwise $o_\pervalue[j] = 1 - \tilde
  o[j]$. Release the vector $o_\pervalue$.

Both are $\diffp$-differentially private (see Appendix~\ref{sec:privacy_proofs}).
For coordinate release, define the observation function
\begin{align*}
  \beta_{\coordrel}(x, (j, o_\coordrel)) \eqdef
  p_\coordrel^{-1}(j) \;
  \frac { o_\coordrel - 1 + q_{\diffp} } { 2 q_{\diffp} - 1 } \; c \;
  e_j,
\end{align*}
where $e_j$ denotes the $j$'th standard basis vector.
For the per-value statistics scheme, define the observation function,
\begin{align}
  \label{eqn:pervalue-beta-construction}
  \beta_{\pervalue}(x, o_\pervalue) \eqdef
  \frac { o_\pervalue - \mathbf{1} + q_{\diffp/\bar\diameter} \mathbf{1} } { 2 q_{\diffp/\bar\diameter} - 1 } \; c.
\end{align}
In either case, we have that $\E[\beta(x, o)] = \mu^*$, as required by
\refeqn{betaGivesMu} for $\MomML$ to be consistent.

The two schemes offer a tradeoff: when $\tilde{o}$ is dense,
coordinate release is advantageous, as our best norm bound
$\bar\diameter$ may be as large as the dimension $d$, so although we
reveal only a single coordinate at a time, we noise it by a
lower-variance distribution $q_{\diffp}$ rather than the
$q_{\diffp/d}$ noise of the per-value scheme. Meanwhile, per-value
$\phi$-RR enjoys lower variance when $\tilde{o}$ has low $\ell_1$
norm.
The latter case arises, for instance, if $\phi$ is a sparse binary
vector as is common in structured prediction.
Appendix~\ref{sec:variance_appendix} and~\ref{sec:comparison} present more
details about this tradeoff offered by the schemes.

Summarizing, we have three randomized response schemes. Classical RR
appeals only in unstructured problems with few outputs $\mY$. In the
structured setting, we can move to the sufficient statistics $\phi$ by
Proposition~\ref{prop:sufficient_channel}, and exploit their structure
with either of two schemes based on our knowledge of the 1-norm or
sparsity of statistics $\phi$.

\section{Learning with lightweight annotations}
\label{sec:annotations}
\vspace{-2em}

\begingroup
\setlength{\tabcolsep}{2pt}
\begin{center}
\begin{table}
\hspace{.35cm}
{ \fontsize{0.3cm}{0.4cm} \selectfont
\begin{tabular}{c | c c c c c c c c c }
$x$    & The & quick & brown & [\textbf{fox} & \textbf{jumps} & \textbf{over} & \textbf{the} & \textbf{lazy} & \textbf{dog}] \\ \hline
$y$    & DT & JJ & JJ & [\textbf{NN} & \textbf{VBZ} & \textbf{IN} & \textbf{DT} & \textbf{JJ} & \textbf{NN}] \\ \hline
$o$   & & & &  &  & \multicolumn{2}{c}{\# \textbf{\textcolor{blue}{NN}} = 2} & & \\
\end{tabular}
\caption{Part-of-speech tagging with region annotations.
  An annotator is given a region (bold, in brackets)
  and asked to count the number of times particular tags (e.g., NN) occurs.
\label{tab:annotations}}
}
\end{table}
\end{center}
\endgroup

For a sequence labeling task, e.g., part-of-speech (POS) tagging,
it can be tedious to obtain fully-labeled input-output sequences for training.
This motivates a line of work which attempts to learn
from weaker forms of annotation \citep{mann08ge,haghighi06prototype,liang09measurements}.
We focus on \emph{region annotations}, where an annotator
is asked to examine only a particular subsequence of the input
and count the number of occurrences of some label (e.g., nouns).
The rationale is that it is cognitively easier for the annotator to focus on one label at a time
rather than annotating from a large tag set, and
physically easier to hit a single yes/no or counter button
than to select precise locations,
especially in mobile-based crowdsourcing interfaces \citep{vaish2014twitch}.
See \reftab{annotations} for an example.

More formally, the supervision $S(o \mid y)$ is defined as follows:
First, choose the starting position $\posstart$ uniformly from $\{ 1, \dots, L - w \}$,
and set the ending position $\posend = \posstart + w - 1$,
where $w$ is a fixed window size.
Let $r = \{ \posstart, \dots, \posend \}$ denote this region.
Next, choose a subset of tags $B$ uniformly from the tag set (e.g., $\{ \text{NN}, \text{DT} \}$).
From here, the observation $o$ is generated deterministically:
For each tag $b \in B$, the annotator counts the number of occurrences in the region:
$N[b] = |\{ j \in r : y[j] = b \}|$.
The final observation is $o = (r, B, N)$.

In this setting, not only is the marginal likelihood non-convex,
inference requires summing over possible ways of realizing the counts,
which is exponential either in the window size $w$ and $|B|$.

\paragraph{Estimation.}

For our estimator to work, we make two assumptions:
\begin{enumerate}[noitemsep,nolistsep]
  \item The node potentials only depend on $x[j]$:
$\phi_j(y[j], x, j) = f(x[j], y[j])$; and
  \item Under the true conditional distribution, $y[j]$ only depends on $x[j]$:
$p^*(y[j] \mid x) = p^*(y[j] \mid x[j])$.
\end{enumerate}
These are admittedly strong independence assumptions similar to IBM model 1 for word alignment \citep{brown93mt}
or the unordered translation model of \citet{steinhardt2015relaxed}.
Even though our model is fully factorized and lacks
edge potentials, inference $p_\theta(y \mid x,o)$ is expensive as conditioning on the indirect supervision $o$ couples all of the $y$ variables.
This typically calls for approximate inference techniques
common to the realm of structured prediction.
\citet{steinhardt2015relaxed} developed a relaxation to cope with this supervision,
but this still requires approximate inference via sampling and non-convex optimization.

In contrast to the local privacy examples, the new challenge is that the
observation $o$ does not provide enough information to evaluate a single node
potential, even stochastically.
So we cannot directly write $\mu^*$ in terms
of functions of the observations.
As a bridge, define the localized conditional distributions:
$w^*(a, b) \eqdef \BP[y[j] = b \mid x[j] = a]$,
which by assumption 2 specify the entire conditional distribution.
The sufficient statistics $\mu^*$ can be written as in terms of $w^*$:
\begin{align}
  \label{eqn:wToMu}
  \mu^* = \E\left[\sum_{j=1}^L \sum_b w^*(x[j], b) f(x[j], b) \right].
\end{align}

We now define constraints that relate the observations $o$ to $w^*$.
Recall that each observation $o$ includes a region $r$, a tag $b$,
and a vector of counts $N = [\sum_{j \in r} \1[y[j] = b]]_{b \in B}$, one for each tag $b$.
For each input $x \in \sX$ and tag $b$, we have the identity:
\begin{align}
  \label{eqn:nToW}
  \E[N[b] \mid x, r] = \sum_{j \in r} w^*(x[j], b).
\end{align}

While we do not observe the LHS, we observe $N[b]$, which is unbiased estimate
of the RHS of \refeqn{nToW}.
We can therefore solve a regression problem with response $N$
to recover a consistent estimate $\hat w$ of $w^*$:
\begin{align}
  \hat w = \arg\min_w \sum_{i=1}^n \sum_{b \in B^{(i)}} \p{\sum_{j \in r^{(i)}} w(x^{(i)}[j], b) - N^{(i)}[b]}^2.
\end{align}
For instance, the example in \reftab{annotations} contributes:
$(\BP[\text{NN} \mid \text{fox}] + \BP[\text{NN} \mid \text{jumps}] + \cdots + \BP[\text{NN} \mid \text{dog}] - 2)^2$.
Finally, we plug in $\hat w$ into \refeqn{wToMu} obtain $\hat \mu$.

\section{Asymptotic analysis}
\label{sec:variance}

We have two estimators:
maximum marginal likelihood ($\MargML$),
which is difficult to compute,
requiring non-convex optimization and possibly intractable inference;
and our moments-based estimator ($\MomML$),
which is easy to compute,
requiring only solving a linear system and convex optimization.
In this section, we study and compare the \emph{statistical efficiency}
of $\MargML$ and $\MomML$.
For simplicity, we focus on unconditional setting where $x$ is empty,
and omit $x$ in this section.
We also assume our exponential family model is well-specified and that
$\theta^*$ are the true parameters.  All expectations are taken with respect
to $y \sim p_{\theta^*}$.

Recall from \refeqn{betaGivesMu} that $\E[\beta(o) \mid x] = \phi(y)$.
We can therefore think of $\beta(o)$ as a ``best guess'' of $\phi(y)$.
The following lemma provides the asymptotic variances of the estimators:
\begin{proposition}[General asymptotic variances] 
  \label{prop:variance}
  Let $\sI \eqdef \cov[\phi(y)]$ be the Fisher information matrix.
  Then
  \begin{align*}
    \sqrt{n}(\MargML - \theta^*) \cvd \sN(0, \vMargML) ~~ \mbox{and}\\
    \sqrt{n}(\MomML - \theta^*) \cvd \sN(0, \vMomML),
  \end{align*}
  where the asymptotic variances are
  \begin{align}
    \label{eqn:margVariance}
    \vMargML &= (\sI - \E[\cov[\phi(y) \mid o]])^{-1}, \\
    \label{eqn:momVariance}
    \vMomML &= \sI^{-1} + \sI^{-1} \E[\cov[\beta(o) \mid y]] \sI^{-1}.
  \end{align}
\end{proposition}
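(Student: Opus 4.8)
The plan is to handle $\MargML$ via classical maximum-likelihood asymptotics and $\MomML$ via the delta method (equivalently, Z-estimation), and then to apply the law of total variance twice to convert the ``raw'' asymptotic variances into the interpretable forms stated. Throughout I would assume the standard exponential-family regularity: $A$ is smooth with $\nabla A(\theta) = \E_\theta[\phi(y)]$ and $\nabla^2 A(\theta) = \cov_\theta[\phi(y)]$, the mean map $\nabla A$ is a diffeomorphism onto the interior of the mean polytope, and the relevant information matrices are nonsingular.

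For $\MargML$, I would first compute the score of the marginal model $p_\theta(o) = \sum_y S(o\mid y)\,p_\theta(y)$. Writing $\log p_\theta(o) = \log\sum_y S(o\mid y)\exp\{\phi(y)^\top\theta\} - A(\theta)$ and differentiating in $\theta$, the normalized weights $S(o\mid y)p_\theta(y)/\sum_{y'}S(o\mid y')p_\theta(y')$ are exactly the posterior $p_\theta(y\mid o)$, so the score collapses to $\nabla\log p_{\theta^*}(o) = \E[\phi(y)\mid o] - \mu^*$. This is centered by the tower property, so the Fisher information of the marginal model is $\sI_{\mathrm{marg}} = \cov[\E[\phi(y)\mid o]]$, and the MLE is asymptotically normal with variance $\sI_{\mathrm{marg}}^{-1}$. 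Applying the law of total variance to $\phi(y)$ conditioned on $o$, namely $\sI = \cov[\phi(y)] = \E[\cov[\phi(y)\mid o]] + \cov[\E[\phi(y)\mid o]]$, rewrites $\sI_{\mathrm{marg}} = \sI - \E[\cov[\phi(y)\mid o]]$ and yields $\vMargML$.

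For $\MomML$, the first-order condition of $\arg\max_\theta\{\hat\mu^\top\theta - A(\theta)\}$ is $\nabla A(\MomML) = \hat\mu$, i.e.\ $\MomML = (\nabla A)^{-1}(\hat\mu)$. Since $\hat\mu = \hat\E[\beta(o)]$ is an i.i.d.\ average with $\E[\beta(o)] = \mu^* = \nabla A(\theta^*)$, the CLT gives $\sqrt n(\hat\mu - \mu^*)\cvd\sN(0,\cov[\beta(o)])$. The delta method with $(\nabla A)^{-1}$, whose Jacobian at $\mu^*$ is $[\nabla^2 A(\theta^*)]^{-1} = \sI^{-1}$, then gives $\sqrt n(\MomML - \theta^*)\cvd\sN(0,\sI^{-1}\cov[\beta(o)]\sI^{-1})$. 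Finally, using the conditional unbiasedness $\E[\beta(o)\mid y] = \phi(y)$ from \refeqn{betaGivesMu}, the law of total variance gives $\cov[\beta(o)] = \E[\cov[\beta(o)\mid y]] + \cov[\phi(y)] = \E[\cov[\beta(o)\mid y]] + \sI$; substituting produces $\vMomML = \sI^{-1} + \sI^{-1}\E[\cov[\beta(o)\mid y]]\sI^{-1}$.

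The delta-method and inverse-Fisher steps are routine; the one step that deserves care is the marginal-likelihood score computation, where I must verify that the gradient of the log-partition of the tilted distribution is precisely the posterior expectation $\E[\phi(y)\mid o]$ and that the resulting score is centered, so that the information identity applies. Converting the two raw variances $\sI_{\mathrm{marg}}^{-1}$ and $\sI^{-1}\cov[\beta(o)]\sI^{-1}$ into the stated forms is then just bookkeeping with the law of total variance.
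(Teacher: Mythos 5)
Your proposal is correct and follows essentially the same route as the paper's proof: the score of the marginal likelihood collapsing to $\E[\phi(y)\mid o] - \E[\phi(y)]$ plus the law of total variance for $\vMargML$, and the delta method applied to $(\nabla A)^{-1}$ with the decomposition $\cov[\beta(o)] = \cov[\E[\beta(o)\mid y]] + \E[\cov[\beta(o)\mid y]]$ for $\vMomML$. Your write-up is somewhat more explicit about the regularity conditions and the posterior-weight calculation, but the substance is identical.
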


Let us compare the asymptotic variances of $\MargML$ and $\MomML$ to that of
the fully-supervised maximum likelihood estimator $\hat{\theta}_{\rm full}$,
which has access to $\{(x^{(i)}, y^{(i)})\}$, and satisfies
$\sqrt{n}(\hat{\theta}_{\rm full} - \theta^*) \cvd \sN(0, \sI^{-1})$.

Examining the asymptotic variance of $\MargML$ \refeqn{margVariance}, we see
that the loss in statistical efficiency with respect to maximum likelihood is the
amount of variation in $\phi(y)$ not explained by $o$, $\cov[\phi(y) \mid o]$.
Consequently, if $y$ is simply deterministic given
$o$, then $\cov[\phi(y) \mid o] = 0$, and $\MargML$ achieves the
statistically efficient asymptotic variance $\sI^{-1}$.

The story with $\MomML$ is dual to that for the marginal
likelihood estimator. Considering the second term in
expression~\eqref{eqn:momVariance}, we see that the loss of efficiency due
to our observation model grows linearly in the variability of the
observations $\beta(o)$ not explained by $y$. Thus, unlike $\MargML$, even
if $y$ is deterministic given $o$ (so $o$ reveals full information about
$y$), we do not recover the efficient covariance $\sI^{-1}$.
As a trivial example, let $\phi(y) = y \in
\{0,1\}$ and the observation $o = [y, y + \eta]^\top$ for $\eta \sim \sN(0, 1)$,
so that $o$ contains a faithful copy of $y$,
and let $\beta(o) = \frac{o[1] + o[2]}{2} = y + \frac{\eta}{2}$. Then
$\E[\cov[\beta(o) \mid y]] = \frac{1}{4}$, and
the asymptotic relative efficiency of $\MomML$ to $\MargML$
is $\frac{1}{1 + \sI^{-1} / 4}$.
Roughly, $\MargML$ integrates posterior information about $y$ better
than $\MomML$ does.

\begin{proof}
To compute $\vMargML$, we follow standard arguments in
\citet{vaart98asymptotic}.  If $\ell(o; \theta)$ is the marginal
log-likelihood, then a straightforward calculation yields $\nabla \ell(o,
\theta^*) = \E[\phi(y) \mid o] - \E[\phi(y)]$.  The asymptotic variance is
the inverse of $\E[\nabla \ell(o, \theta^*) \nabla \ell(o, \theta^*)^\top] =
\cov[\E[\phi(y) \mid o]]$; applying the variance decomposition $\sI =
\cov[\E[\phi(y) \mid o]] + \E[\cov[\phi(y) \mid o]]$
gives~\eqref{eqn:margVariance}.

To compute $\vMomML$, recall that the moments-based estimator computes
$\hat\mu = \hat\E[\beta(o)]$ and $\hat\theta = (\nabla A)^{-1}(\hat\mu)$.
Apply the delta method, where $\nabla (\nabla A)^{-1}(\mu^*) = (\nabla^2
A(\theta^*))^{-1} = \cov[\phi(y)]^{-1}$.  Finally, decompose $\cov[\beta(o)]
= \cov[\E[\beta(o) \mid y]] + \E[\cov[\beta(o) \mid y]]$ and recognize that
$\E[\beta(o) \mid y] = \phi(y)$ to obtain~\eqref{eqn:momVariance}.
\end{proof}

\paragraph{Randomized response.}

To obtain concrete intuition for \refprop{variance},
we specialize to the case where $S$ is the randomized response \refeqn{rr}.
In this setting, 
$\beta(o) = \epsilon^{-1} \phi(o) - h$ for some constant vector $h$.
Recall the supervision model:
$z \sim \text{Bernoulli}(\epsilon)$,
$o = y$ if $z = 1$ and $o = y' \sim u$ if $z = 0$.

\begin{lemma}[asymptotic variances (randomized response)] 
  \label{lem:variancerr}
  Under the randomized response model of \refeqn{rr}, the asymptotic variance
  of $\MargML$ is
  \begin{align}
    \label{eqn:variances-rr}
    \lefteqn{\vMomML = \sI^{-1} + \sI^{-1} H  \sI^{-1},} \\
    H &\eqdef \frac{1-\epsilon}{\epsilon^2}\cov[\phi(y')]
      + \frac{1 - \epsilon}{\epsilon}
      \E[(\phi(y) - \E[\phi(y')])\vecsquare]. \nonumber
  \end{align}
\end{lemma}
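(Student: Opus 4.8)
The plan is to specialize the general variance formula \refeqn{momVariance} of \refprop{variance} to the randomized response supervision, for which the entire task reduces to evaluating the single matrix $\E[\cov[\beta(o) \mid y]]$ and showing it equals $H$. Indeed, the claimed expression $\vMomML = \sI^{-1} + \sI^{-1} H \sI^{-1}$ is exactly \refeqn{momVariance} with $\E[\cov[\beta(o)\mid y]]$ replaced by $H$, so the lemma follows immediately once the identity $\E[\cov[\beta(o)\mid y]] = H$ is established.

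First I would exploit the affine form $\beta(o) = \epsilon^{-1}\phi(o) - h$ noted above the statement, where $h$ is a constant vector. Since $h$ does not depend on $o$, conditioning on $y$ gives $\cov[\beta(o)\mid y] = \epsilon^{-2}\cov[\phi(o)\mid y]$, so it suffices to compute the conditional covariance of $\phi(o)$ and multiply by $\epsilon^{-2}$. The key structural observation is that, given $y$, the observation $o$ is a two-component mixture: with probability $\epsilon$ we have $o = y$ (so $\phi(o) = \phi(y)$), and with probability $1-\epsilon$ we draw a fresh $y' \sim u$ independent of $y$ (so $\phi(o) = \phi(y')$). This makes both conditional moments immediate to read off.

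Next I would compute the conditional mean $\E[\phi(o)\mid y] = \epsilon\,\phi(y) + (1-\epsilon)\E[\phi(y')]$ and, using the independence of $y'$ from $y$, the conditional second moment $\E[\phi(o)\vecsquare \mid y] = \epsilon\,\phi(y)\vecsquare + (1-\epsilon)\E[\phi(y')\vecsquare]$. Subtracting the outer product of the conditional mean and simplifying the $\phi(y)\vecsquare$ coefficient via $\epsilon - \epsilon^2 = \epsilon(1-\epsilon)$ expresses $\cov[\phi(o)\mid y]$ as a combination of $\phi(y)\vecsquare$, the cross terms $\phi(y)\,\E[\phi(y')]^\top$ and its transpose, and constant matrices. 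Taking the outer expectation over $y$ then replaces $\phi(y)$ by $\E[\phi(y)]$ in the linear cross terms while retaining $\E[\phi(y)\vecsquare]$ in the quadratic term, and dividing by $\epsilon^2$ yields an explicit formula for $\E[\cov[\beta(o)\mid y]]$.

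Finally, I would expand the target $H$ by writing $\cov[\phi(y')] = \E[\phi(y')\vecsquare] - \E[\phi(y')]\vecsquare$ and $\E[(\phi(y)-\E[\phi(y')])\vecsquare] = \E[\phi(y)\vecsquare] - \E[\phi(y)]\E[\phi(y')]^\top - \E[\phi(y')]\E[\phi(y)]^\top + \E[\phi(y')]\vecsquare$, then collect the coefficients of $\epsilon^{-2}$ and $\epsilon^{-1}$ and match them term by term against the expanded $\E[\cov[\beta(o)\mid y]]$. I expect no conceptual difficulty; the work is purely the bookkeeping of the cross terms. The one place to be careful is the coefficient of $\E[\phi(y')]\vecsquare$: on the covariance side it arises from the squared mean as $-\tfrac{(1-\epsilon)^2}{\epsilon^2}$, while on the $H$ side it splits as $-\tfrac{1-\epsilon}{\epsilon^2}$ (from $-\cov[\phi(y')]$) plus $+\tfrac{1-\epsilon}{\epsilon}$ (from the expanded square), and one must verify $-\tfrac{1-\epsilon}{\epsilon^2} + \tfrac{1-\epsilon}{\epsilon} = -\tfrac{(1-\epsilon)^2}{\epsilon^2}$. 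It is also worth remembering that the conditional covariance averages over both the Bernoulli switch and the independent draw $y'$, which is what makes the constant $\E[\phi(y')\vecsquare]$ term appear. Matching all coefficients completes the identity $\E[\cov[\beta(o)\mid y]] = H$ and hence the lemma.
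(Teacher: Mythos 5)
Your proof is correct, and it shares the paper's overall skeleton---both arguments specialize \refeqn{momVariance} from \refprop{variance} and reduce the lemma to the single identity $\E[\cov[\beta(o) \mid y]] = H$---but you evaluate that conditional covariance by a genuinely different decomposition. The paper makes the Bernoulli switch $z$ explicit and applies the law of total covariance given $(y,z)$: the term $\E[\cov[\beta(o) \mid y, z] \mid y]$ contributes $\frac{1-\epsilon}{\epsilon^2}\cov[\phi(y')]$ (zero when $z=1$, the fresh-draw variance when $z=0$), while $\cov[\E[\beta(o) \mid y, z] \mid y]$ is the covariance of a Bernoulli mixture of two fixed vectors, giving $\frac{1-\epsilon}{\epsilon}(\phi(y) - \E[\phi(y')])\vecsquare$; the two pieces of $H$ thus drop out directly, each with a clear interpretation (flip noise versus selection noise), and essentially no algebra is required. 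You instead compute the raw conditional first and second moments of $\phi(o)$ given $y$, subtract the outer product of the mean, take the outer expectation over $y$, and match coefficients against the expanded $H$. This avoids introducing the auxiliary variable $z$ but costs you the cross-term bookkeeping, in particular the coefficient check $-\frac{1-\epsilon}{\epsilon^2} + \frac{1-\epsilon}{\epsilon} = -\frac{(1-\epsilon)^2}{\epsilon^2}$ that you correctly flag; that identity holds, and your expansion agrees with the paper's expression term by term (using, as you do, that $y'$ is independent of $y$ and that $h$ is constant so $\cov[\beta(o)\mid y] = \epsilon^{-2}\cov[\phi(o)\mid y]$). Both routes are elementary and sound: the paper's is shorter and makes the provenance of each term of $H$ transparent, while yours is a self-contained mechanical verification that needs no hierarchical conditioning.
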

The matrix $H$ governs the loss of efficiency, which stems from two sources:
(i) $\cov[\phi(y')]$, the variance when we sample $y' \sim u$; and
(ii) the variance in choosing between $y$ and $y'$.
If $y'$ and $y$ have the same distribution,
then $H = \sI \frac{1 - \epsilon^2}{\epsilon^2}$
and $\vMomML = \epsilon^{-2} \sI^{-1}$.

\begin{proof}
  We decompose $\cov[\beta(o) \mid y]$ as
  \begin{align*}
    \lefteqn{\E[\cov[\beta(o) \mid y, z] \mid y]
      + \cov[\E[\beta(o) \mid y, z] \mid y]} \\
    & = \E[\epsilon \cdot 0
      + (1 - \epsilon) \cov[\epsilon^{-1} \phi(y')]
      \mid y] \\
    & ~~ + \cov[z\epsilon^{-1} \phi(y)
      + (1 - z) \E[\epsilon^{-1}\phi(y')] \mid y] \\
    & = \frac{1 - \epsilon}{\epsilon^2} \cov[\phi(y')]
    + \frac{1 - \epsilon}{\epsilon} (\phi(y) - \E[\phi(y')])\vecsquare,
  \end{align*}
  where we used
  $\beta(y) = \epsilon^{-1}\phi(y) - h$.
\end{proof}

\begin{figure}
  \begin{center}
    \begin{tabular}{cc}
      \hspace{-.4cm}
      \includegraphics[width=.54\columnwidth]{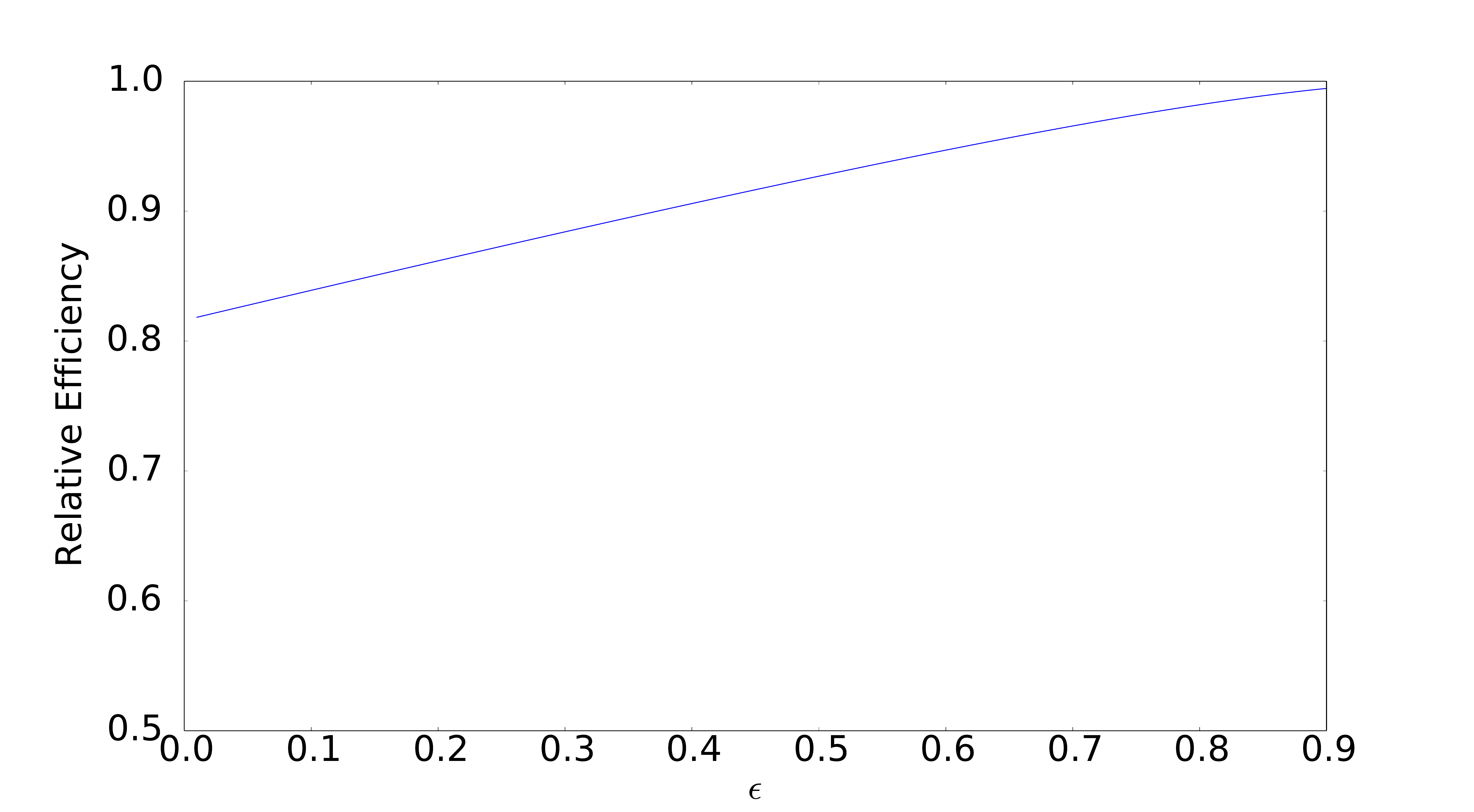}
      & \hspace{-.3cm}
      \includegraphics[width=.54\columnwidth]{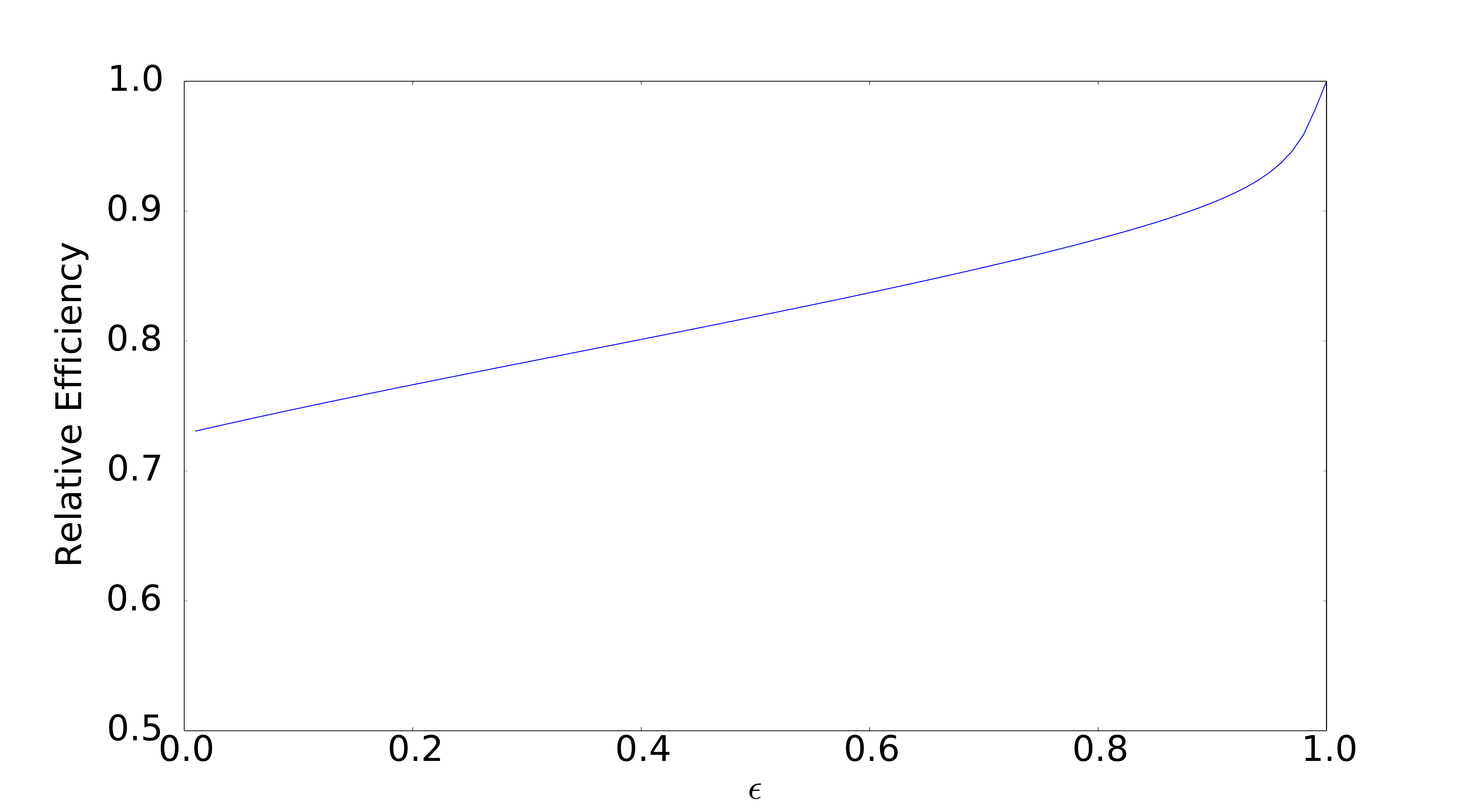} \\
      (a) & (b)
    \end{tabular}
    \caption{\label{fig:efficiency-rr}
      The efficiency of $\MomML$ relative to $\MargML$ as $\epsilon$ varies
      for weak (a) and strong (b) signals $\theta$.}
  \end{center}
\end{figure}

\paragraph{An empirical plot.}
The H\'ajek-Le Cam convolution and local asymptotic minimax theorems give
that $\MargML$ is the most statistically efficient estimator.  We now
empirically study the efficiency of $\MomML$ relative to $\MargML$, where
$\eff \eqdef d^{-1} \tr(\vMargML \vMomML^{-1})$, the average of the relative
variances per coordinate of $\MargML$ to $\MomML$.  We continue to focus on
randomized response in the unconditional case.

To study the effect of $\epsilon$, we consider the following probability
model: we let $y \in \{1, 2, 3, 4\}$, define
\begin{equation*}
  \phi(1) = \left[\begin{matrix} 0 \\ 0 \end{matrix} \right], ~~
  \phi(2) = \left[\begin{matrix} 0 \\ 1 \end{matrix} \right], ~~
  \phi(3) = \left[\begin{matrix} 1 \\ 0 \end{matrix} \right], ~~
  \phi(4) = \left[\begin{matrix} 1 \\ 1 \end{matrix} \right],
\end{equation*}
and set $p_\theta(y) \propto \exp(\theta^\top \phi(y))$.  We set $\theta =
[2, -0.1]^\top$ and $\theta = [5, -1]^\top$ to represent weak and strong
signals $\theta$ (the latter is harder to estimate, as the Fisher
information matrix is much smaller); when $\theta = 0$, the asymptotic
variances are equal, $\vMomML = \vMargML$. In
Figure~\ref{fig:efficiency-rr}, we see that the asymptotic efficiency of
$\MomML$ relative to $\MargML$ decreases as $\epsilon \to 0$, which is
explained by the fact that---as we see in
expression~\eqref{eqn:margVariance}---the $\MargML$ estimator leverages the
prior information about $y$ based on $\theta^*$, while as $\epsilon \to 0$,
expression~\eqref{eqn:variances-rr} is dominated by the $1 / \epsilon^2
\cov[\phi(y')]$ term, where $y'$ is uniform. Moreover, as $\theta$
grows larger, the conditional covariance $\cov[\phi(y) \mid o]$
is much smaller than the covariance $\cov[\beta(o) \mid y]$,
so that we expect that $\vMargML \preceq \vMomML$.

\section{The geometry of two-step estimation}
\label{sec:two_step}

\renewcommand{\comment}[1]{}

We now provide some geometric intuition
about the differences between $\MargML$ and $\MomML$,
establishing a connection between $\MomML$ and the EM algorithm
as a byproduct of our discussion.
For concreteness,
let $\sY = \{ 1, \dots, m \}$ be a finite set and let $\sP$ be the set of all
distributions over $\sY$ (represented as $m$-dimensional vectors).
Let $\sF \eqdef \{ p_\theta : \theta \in \R^d \} \subseteq \sP$ 
be a natural exponential family over $\mY$ with 
$p_\theta(y) \propto \exp(\theta^\top \phi(y))$.
See \reffig{surface_F} for an example where $m = 3$ and $d = 2$.
Note that in the space of distributions, $\sF$ is a non-convex set.

Let $\sO = \{ 1, \dots, k \}$ be the set of observations.
We can represent the supervision function $S(o \mid y)$ as a matrix
$S \in \R^{k \times m}$.
For $p \in \sP$, we can express the marginal distribution over $o$ as $q = S p$.
Let $\hat q = \inv{n} \sum_{i=1}^n \delta_{o^{(i)}}$ be the empirical
distribution over observations.

The maximum marginal likelihood estimator can now be written succinctly as:
\begin{align}
  \label{eqn:MargMLobj}
  \MargML = \argmin_{p \in \sF} \KL{\hat q}{S p}.
\end{align}
While the KL-divergence is concave in $p$, the non-convex constraint set $\sF$
makes the problem difficult.

\SpaceFig{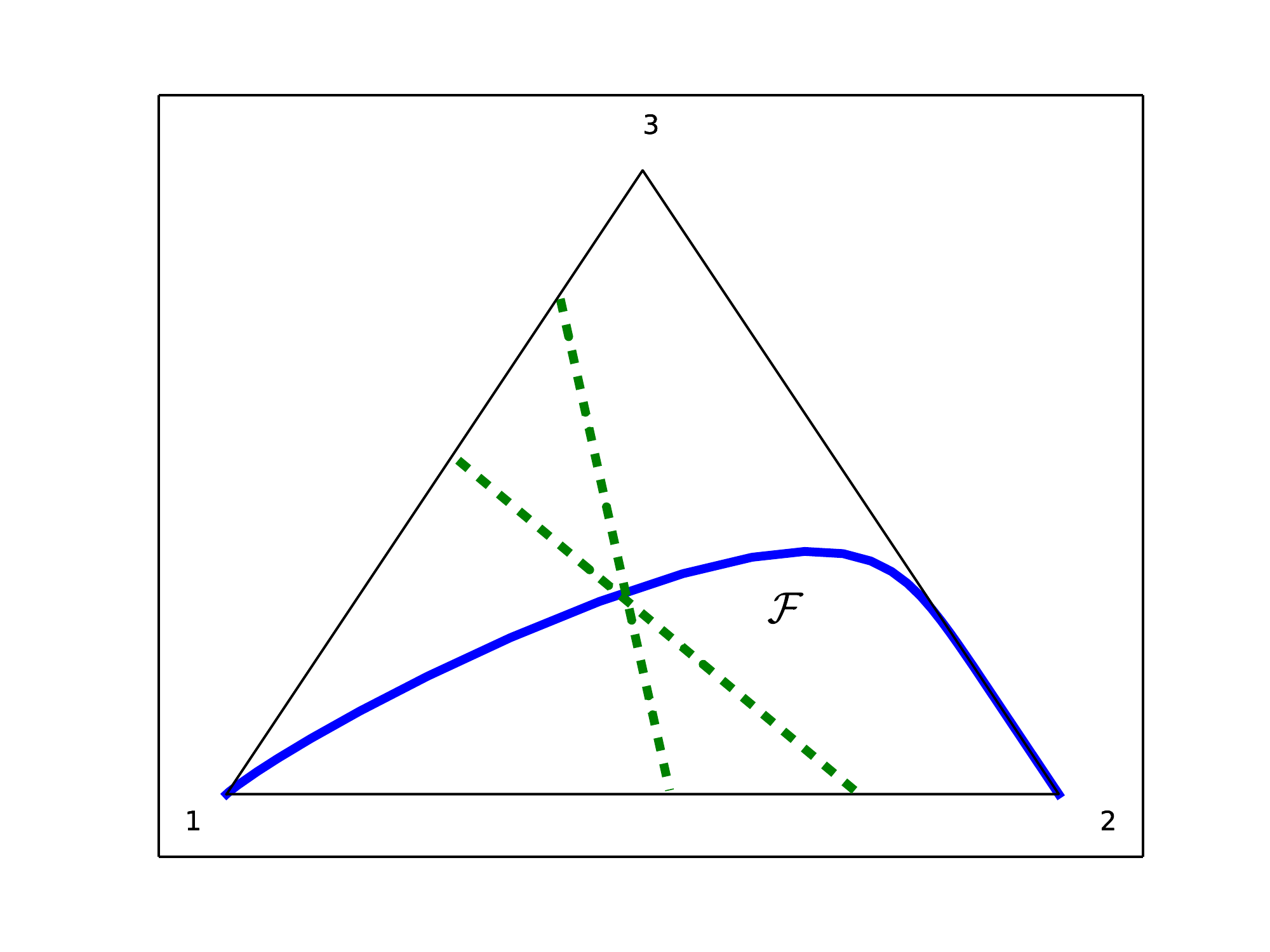}{0.25}{surface_F}{
  Visualization of the exponential
  family $\sF$ and all distributions $\sP$ over $\sY = \{ 1, 2, 3 \}$; here $\sP$ is the 3-simplex.
  The model features for $\sF$ are $\phi(1) = 5, \phi(2) = -1,
  \phi(3) = 0$. The blue curve marks out the exponential family $\sF =
  \{p_\theta : \theta \in \R \}$. Observations yield two moment
  equations (dotted green) whose intersection with the simplex pins down
  the data distribution.}

Our moment-based estimator $\MomML$ can be viewed as a relaxation, where we
first optimize over a relaxed set $\sP$ and then project onto the exponential
family:
\begin{align}
  \label{eqn:two-step-mom}
  \hat{r} = \argmin_{p \in \sP} D(\hat{q}, Sp),
  ~~
  \hat{p} = \argmin_{p \in \sF} \KL{\hat{r}}{p}.
\end{align}
The first step can be computed directly via $r = S^\dagger \hat q$ if $D$ is
the squared Euclidean distance.  If $D$ is KL-divergence, we can use EM
(see the composite likelihood objective of \citet{chaganty2014graphical}),
which converges to the global optimum.
The result is a single distribution $\hat r$ over $y$.
The second step optimizes over $p$ via $\theta$, which is a convex optimization problem,
resulting in $\hat p$ corresponding to $\MomML$.

\Fig{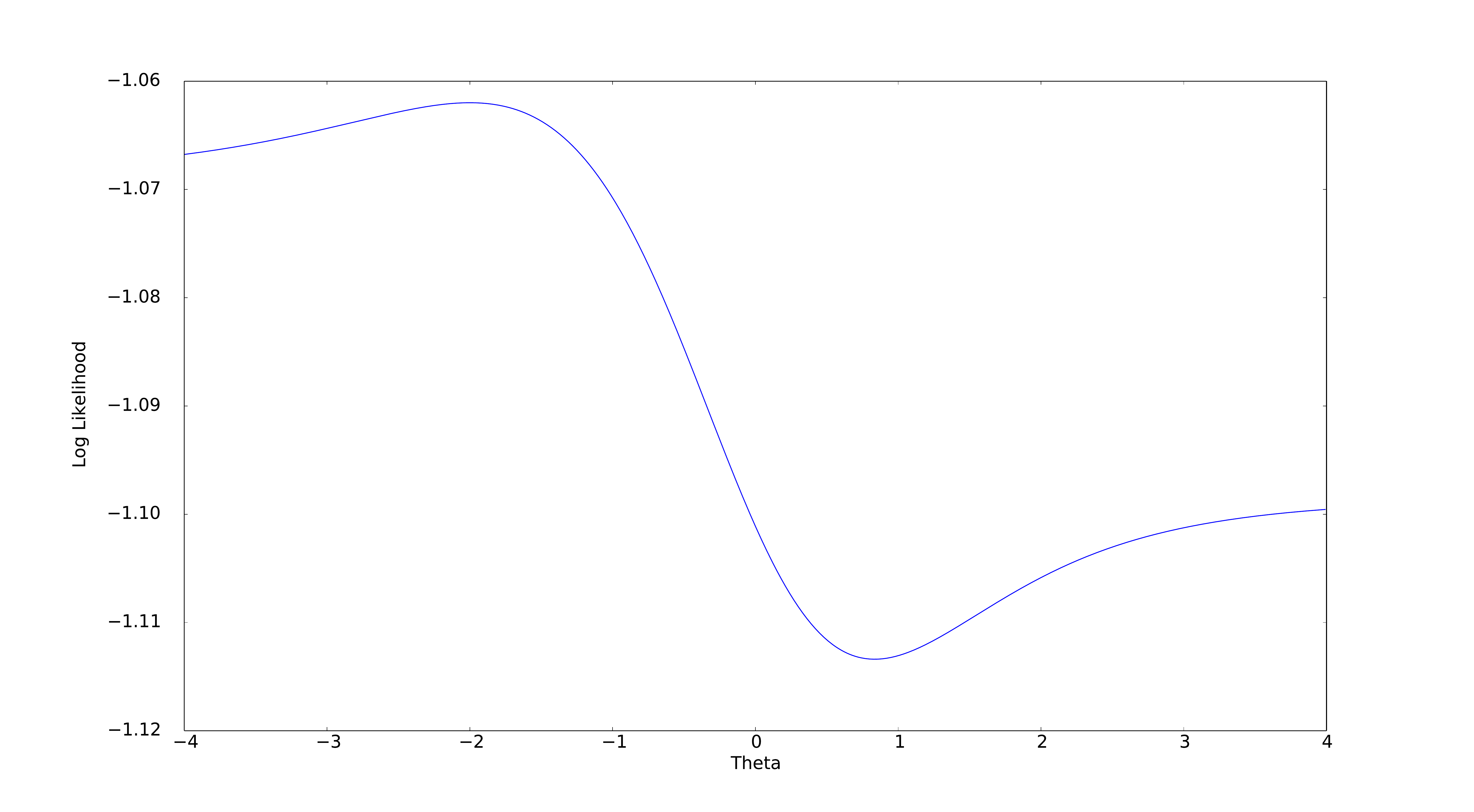}{0.08}{inv_LL}
{Log-marginal likelihood $\log \sum_{y=1}^3 S(o \mid y) p_\theta(y)$,
where the exponential family features are $\phi(1) = 2$, $\phi(2) = 1$, $\phi(3) = 0$. 
The model is well specified with
$S =[
  \tfrac 1 3 ~ \tfrac 1 6 ~ \tfrac 1 4 ~;~
  \tfrac 1 3 ~ \tfrac 1 6 ~ \tfrac 1 2 ~;~
  \tfrac 1 3 ~ \tfrac 2 3 ~ \tfrac 1 4 ]$.
}

Computing $\MargML$ generally requires solving a non-convex optimization
problem (see \reffig{inv_LL} for an example).  When $S$ has full column rank
and the model is well-specified, $\MomML$ is consistent: we have
that $\hat r = \hat{p} = S^\dagger \hat{q} \cvP S^\dagger S p^*
= p^*$. This means that eventually the KL projection of
problem~\eqref{eqn:two-step-mom} is essentially an identity operation: we
almost have $\hat{r} \in \sF$ by the rank assumptions, making the problem
easy.  This assumption strongly depends on the well-specifiedness of the
supervision; indeed, if $q^* \neq S p$ for any $p \in \sP$, then $\|\MargML
- \MomML\| \ge c > 0$, for a constant $c$, even as $n \to \infty$.  We can
relax the column rank assumption, however: $S$ simply needs to
contain enough information about the sufficient statistics, that is, if
$\Phi = [\phi(1) \cdots \phi(m)] \in \R^{d \times m}$ is matrix of sufficient
statistics, we require that $\Phi = S R$ for some matrix $R$.

\paragraph{Deterministic supervision.}

When the supervision matrix $S$ has full column rank,
$\MomML$ converges to $\theta^*$.  There are certainly cases
where $\MargML$ is consistent, but $\MomML$ is not.
What can we say about $\MomML$ in this case?

To obtain intuition, consider the case
the supervision is a deterministic function that maps $y$
to $o$ (region annotations is an example).
In this case, every column of $S$ is an indicator vector,
and $S^\dagger =  S^\top \diag(S \mathbf 1)^{-1} =
S^\top (SS^\top)^{-1}$.

Here, $S^\dagger$ 
distributes probability mass evenly across all the 
$y \in \mY$ that deterministic map to $o$.
In this case, $\MomML$ simply corresponds
to running one iteration of EM on the marginal likelihood,
initializing with the uniform distribution over $y$ ($\theta = 0$).
The E-step conditions on $o$ and places uniform mass over consistent $y$, producing $\hat r$;
the M-step optimizes $\theta$ based on $\hat r$.

\section{Experiments}
\label{sec:experiments}

\SpaceFigTop{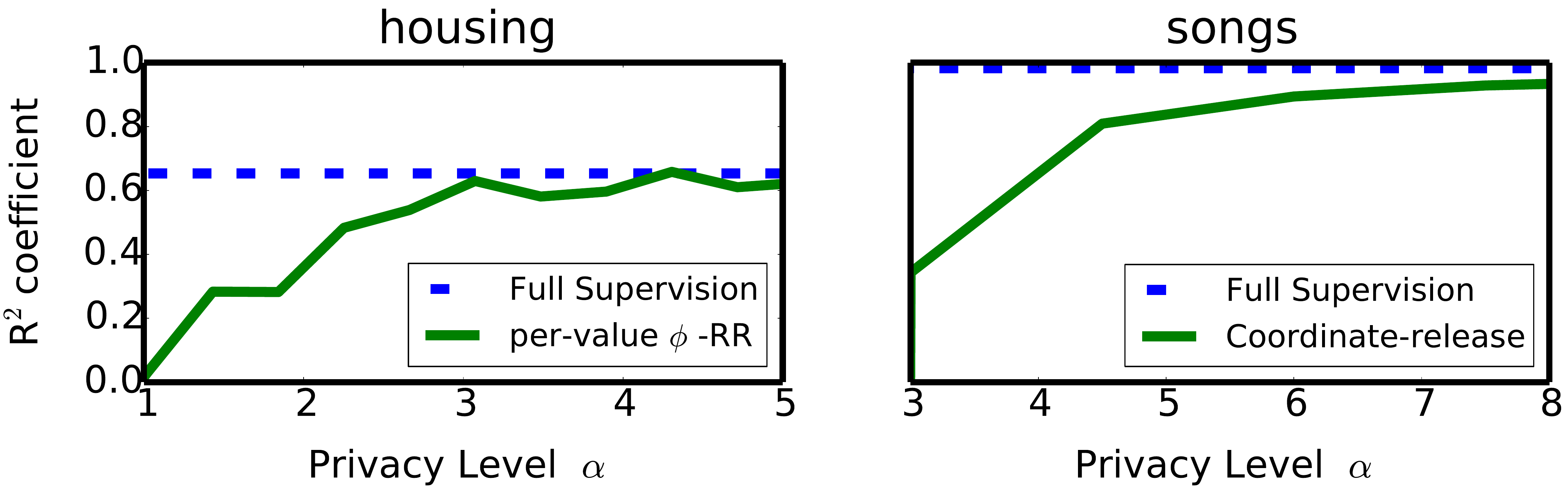}{0.19}{privacy_plot}{ R$^2$ coefficient
  for linear regression when estimating from privately revealed
  sufficient statistics on two datasets.}

\paragraph{Local privacy.}
Following Section~\ref{sec:localPrivacy}, we consider locally private
estimation of a structured model.
We take linear
regression as a simple such structured model: it corresponds to a
pairwise random field over the inputs and the response. The sufficient
statistics are edge features $\phi_{i,j}(x_i,x_j) = x_i x_j$ and $\phi_{i}(x_i,y)
= x_i y$ for each $i,j \in [d]$. 

On the housing dataset, the supervision $o$ is given under the
per-value RR scheme. On the songs dataset, $\tilde{o}$ is a dense
vector, motivating the coordinate release scheme instead.  We choose
$i \in [d]$ at random, with probability $\half$ reveal $x_i y$ and
with probability $\half$ reveal $[x_i^2, x_i x_j, x_j^2]$ with
suitable noise as described in Section~\ref{sec:structuredRR}.  Note
that the noising mechanism privatizes both input variables $x$ as well
as the response $y$.

Figure~\ref{fig:privacy_plot} visualizes the average (over 10 trials)
R$^2$ coefficient of fit for linear regression on the test
set,\footnote{The (uncentered) R$^2$ coefficient of parameters $w$ in a linear
  regression with design $X$ and labels $Y$ is
  $\|Xw-Y\|^2/\|Y\|^2$.}  in response to varying the privacy
parameter $\alpha$.\footnote{We use the housing (\url{mlcomp.org/datasets/840}) and
  songs (\url{mlcomp.org/datasets/748}) data from~\href{http://mlcomp.org}{mlcomp}.}
As expected, the efficiency degrades with increase in the privacy
constraint, though for moderate values of $\alpha$ the loss is not
significant.

\paragraph{Lightweight annotations.}
We experiment with estimating a conditional model for part-of-speech
(POS) sequence tagging from lightweight
annotations.\footnote{We used the Wall Street Journal portion of the
  Penn Treebank. Sections 0-21 comprise the training set and 22-24
  are test.}
Every example in the dataset reveals a sentence and the counts of all tags over a
consecutive window. Following the modeling assumptions in
Section~\ref{sec:annotations}, we use a CRF (per
Section~\ref{sec:setup}) with only node features:
\begin{align*}
  \phi_j(y[j], x, j)[g, a, b] &= \sum_{i \in [L], g \in \mG} \1[g(x_i) = a, y[i] = b],
\end{align*}
where $g$ is a function on the word (e.g., word, prefix, suffix and
word signature). 

When the problem is fully supervised, we maximize the log-likelihood
with stochastic gradient descent (SGD); in this case, estimation
is convex and exact gradients can be tractably computed.
Under count supervision, convexity of the marginal
likelihood is not guaranteed. Although the model
has no edge features, the indirect count supervision places an
potential over the region in which counts are revealed (one
enforcing that the tag sequence is compatible with the counts).
This renders exact inference intractable,
so we approximate it using beam search to compute
stochastic gradients.\footnote{The dataset has 45 tag values. We use a
  beam of size 500 after analytically marginalizing nodes outside the region.}
The moment-based estimator is unaffected by this issue as it
requires no inference and proceeds via a pair convex minimization
programs; we minimize both using SGD.

\SpaceFigTop{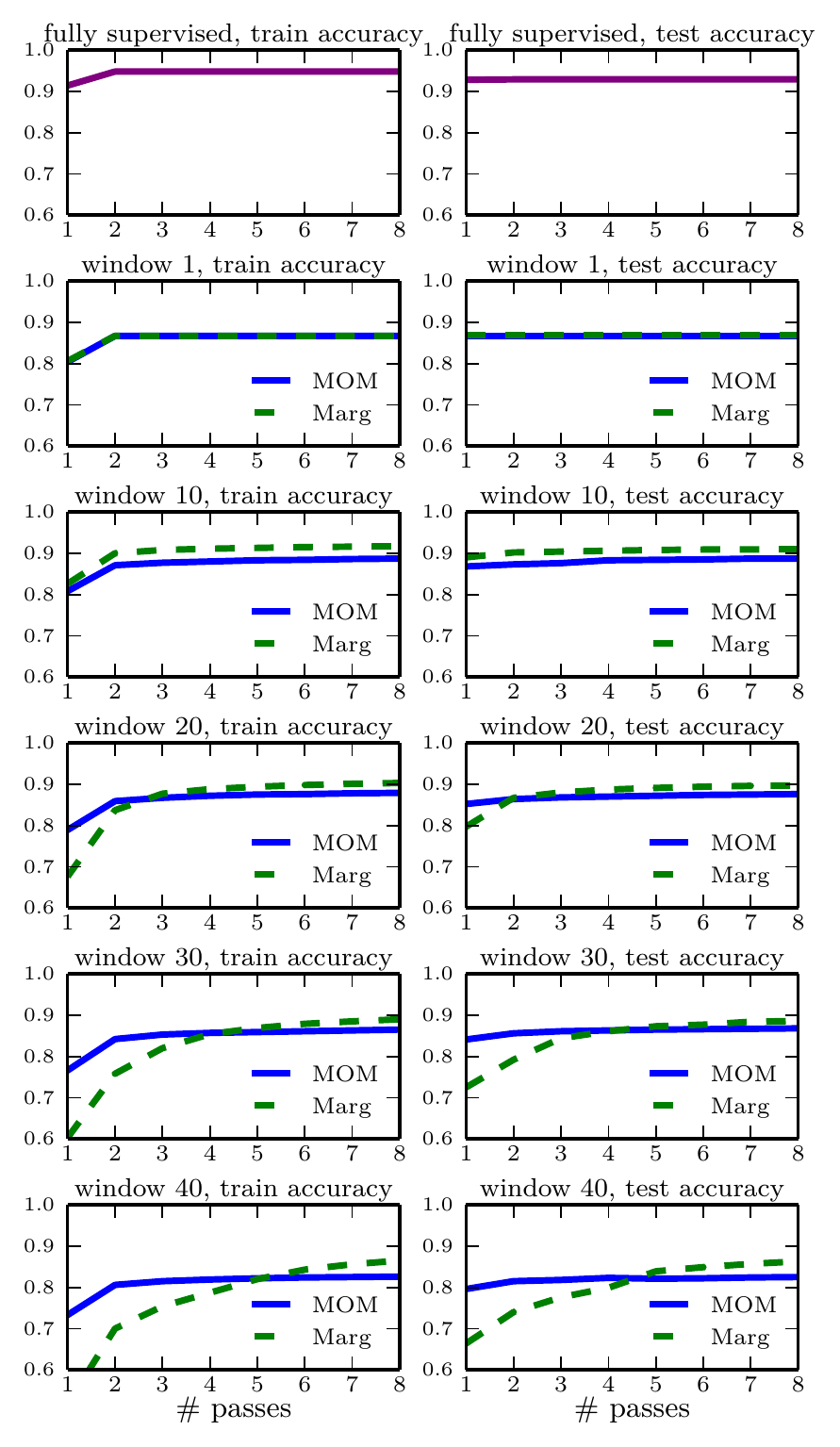}{0.82}{pos_plot}{
  Train and test per-position accuracies for $\MargML$ and $\MomML$
  on part-of-speech tagging,
  under various sized regions of count annotations,
  as training passes are taken through the dataset.}

Figure~\ref{fig:pos_plot} shows train and test accuracies as
we make passes over the dataset.
Typically, after sufficiently many passes, the
marginal likelihood gains an advantage over the moment-based
estimator.
For small regions, we expect the beam search approximation to
be accurate, and indeed the marginal likelihood estimator is
dominant there. For larger regions,
the moment-based estimator (i) achieves high accuracy early
and (ii) dominates for several passes before the marginal likelihood
estimator overtakes it. Altogether, the experiment highlights that the
moment-based estimator is favorable in computationally-constrained settings.

\section{Related work and discussion}
\label{sec:discussion}

This work was motivated by two use cases of indirect supervision: local privacy and cheap annotations.
Each trades off statistical accuracy for another resource: privacy or annotation cost.
Local privacy has its roots in classical randomized response for conducting surveys \citep{warner1965randomized},
which has been extended to the multivariate \citep{tamhane1981randomized} and
conditional \citep{matloff1984use} settings.  In the computer science community,
differential privacy has emerged as a useful formalization of privacy \citep{dwork2006differential}.
We work with the stronger notion of local differential privacy
\citep{evfimievski2004privacy,kasiviswanathan2011can,duchi2013local}.
Our contribution here is two-fold: First, we bring local privacy to the graphical
model setting, which provides an opportunity for the privacy mechanism to be sensitive
to the model structure.  While we believe our mechanisms are reasonable,
an open question is designing \emph{optimal} mechanisms in the structured case.
Second, we connect privacy with other forms of indirect supervision.

The second use case is learning from lightweight annotations,
which has taken many forms in the literature.
Multi-instance learning \citep{maron1998framework} is popular in computer vision,
where it is natural to label the presence but not location of objects \citep{babenko2009visual}.
In natural language processing, there also been work on 
learning from structured outputs where, like this work, only counts of labels
are observed \citep{mann08ge,liang09measurements}.
However, these works resort to likelihood-based approaches which involve non-convex optimization
and approximate inference, whereas in this work,
we show that linear algebra and convex optimization suffice under modeling assumptions.

\citet{quadrianto08labels} showed how to learn from label proportions of groups of examples,
using a linear system technique similar to ours.  However, they assume
that the group is conditionally independent of the example given the label,
which would not apply in our region-based annotation setup since our regions contain arbitrarily
correlated inputs and heterogeneous labels.  In return, we do need to make the stronger assumption
that each label $y[i]$ depends only on a discrete $x[i]$,
so that the credit assignment can be done using a linear program.
An open challenge is to allow for heterogeneity with complex inputs.

Indirect supervision arises more generally in latent-variable models,
which arises in machine translation \citep{brown93mt},
semantic parsing \citep{liang11dcs},
object detection \citep{quattoni04crf},
and other missing data problems in statistics \citep{robins2000inference}.
The indirect supervision problems in this paper have additional structure:
we have an unknown model $p_\theta$ and a \emph{known} supervision function $S$.
It is this structure allows us to obtain computationally efficient method of moments procedures.

We started this work to see how much juice we could squeeze out of just linear moment equations,
and the answer is more than we expected.
Of course, for more general latent-variable models beyond linearly indirectly-supervised problems,
we would need more powerful tools.
In recent years, tensor factorization techniques have 
provided efficient methods
for a wide class of latent-variable models \citep{hsu12identifiability,anandkumar12lda,hsu13spherical,anandkumar13tensor,chaganty13regression,halpern2013unsupervised,chaganty2014graphical}.
One can leverage even more general polynomial-solving techniques to expand the set of models
\citep{wang2015polynomial}.
In general, the method of moments allows us to leverage statistical structure to alleviate
computational intractability, and we anticipate more future developments along these lines.

\paragraph{Reproducibility.}The code, data and experiments for this paper are available
on Codalab at \url{https://worksheets.codalab.org/worksheets/0x6a264a96efea41158847eef9ec2f76bc/}.

\bibliography{all}
\bibliographystyle{icml2016}

\clearpage
\onecolumn
\appendix

\section{Details of privacy schemes}
\label{sec:proofs}

\subsection{Local privacy using sufficient statistics}
\label{sec:privacy_proofs}

\begin{proof}[Proof of Proposition~\ref{prop:sufficient_channel}]
  Because $\phi$ is a sufficient statistic, by definition
  there exists some channel $Q(y \mid \phi(x, y))$ and
  a distribution $F_\theta(\phi(x, y) \mid x)$ such that
  $p_\theta(y \mid x) = Q(y \mid \phi(x, y)) F_\theta(\phi(x, y) \mid x)$.
  If we define
  \begin{align}
    S'(o \mid \phi(x, y)) = \sum_y S(o \mid y) Q(y \mid \phi(x, y)),
  \end{align}
  then \refeqn{preserveXO} follows by substitution and algebra:
\end{proof}

\subsection{Privacy guarantees of proposed schemes}

In order to show differential privacy of the two schemes proposed in
Section~\ref{sec:localPrivacy}, we first note that it suffices to have
differential privacy of the observations $o$ with respect to any
(possibly random) data $\tilde{o} \in \tilde{\mO}$ processed given the private
variable $y$ such that $y \to \tilde{o} \to o$ forms a Markov chain.

To see this, suppose $Q$ is an $\diffp$-differentially private channel
taking the intermediate variable $\tilde{o}$ to $o$ and fix any $x \in
\mX$. Let $R(\cdot \mid y)$ be the distribution of $\tilde{o}$ given $y \in
\mY$. Now, for the end-to-end channel $S$,
\begin{align}
  \sup_{o, y, y'} \frac{
    S( o \mid y )
  }{
    S( o \mid y' )
  }
  &=
  \sup_{o, y, y'} \frac{
    \sum_{\tilde{o} \in \tilde{\mO}} Q(o \mid \tilde{o}) R(\tilde{o} \mid y)
  }{
    \sum_{\tilde{o} \in \tilde{\mO}} Q(o \mid \tilde{o}) R(\tilde{o} \mid y')
  } \\
  &\le
  \sup_{o, y, y'} \frac{
    \max_{\tilde{o}} Q(o \mid \tilde{o})
  }{
    \min_{\tilde{o}} Q(o \mid \tilde{o})
  } \\
  &\le \exp(\diffp).
\end{align}

\paragraph{Coordinate release.}
Recall that in the coordinate release mechanism, we first pick a 
coordinate $j$ and release observation $o_\coordrel$ after flipping
$\tilde{o}[j]$ with probability $\frac {\exp ( \frac{\alpha}{2}) } {1 + \exp (\frac {\alpha } {2 }  )  } $.

\begin{align}
  \frac{
    Q(o_\coordrel, j \mid \tilde o)
  }{
    Q(o_\coordrel, j \mid \tilde o')
  }
  &=\exp\left( \frac \diffp 2 \left( |o_\coordrel - (1-\tilde o[j])| -  |o_\coordrel - (1-\tilde o'[j])| \right) \right)\\
  &\le
  \exp(\alpha),
\end{align}
where the final step is by the triangle inequality applied twice.

\paragraph{Per-value $\phi$-RR.}Privacy of per-value $\phi$-RR follows similarly.

Each coordinate of $o$ is flipped with probability $q_{ \frac { \alpha } {\bar \delta } } =  \frac {\exp ( \frac{\alpha}{2 \bar \diameter}) } {1 + \exp (\frac {\alpha } {2 \bar \diameter }  )  } $, where
$\bar \diameter$ is chosen such that $\bar \diameter \leq \lone{\tilde {o}}, \lone{\tilde{o'}}$ (see Section ~\ref{sec:structuredRR})
\begin{align}
  \frac{
    Q(o_\pervalue \mid \tilde o)
  }{
    Q(o_\pervalue \mid \tilde o')
  }
  &=
  \exp\left( \frac \diffp {2\bar\diameter} \left( \lone{o_\pervalue - (\mathbf{1} - \tilde o)} - \lone{o_\pervalue - (\mathbf{1} - \tilde o')} \right) \right) \\
  &\le
  \exp(\alpha).
\end{align}

\subsection{Variance of moments-based estimator for different privacy schemes.}
\label{sec:variance_appendix}
For simplicity, we once again consider the unconditional case (where $x$ is empty) and assume $\phi \in \{0, 1\}^d$.

\begin{theorem}[Asymptotic variance (coordinate release)]
The asymptotic variance of $\MomML$ for
$\alpha$-differentially private coordinate release scheme, under a
uniform coordinate sampling distribution $p_\coordrel$ is
\begin{align*}
  \vMomML^\coordrel &= \sI^{-1} + \sI^{-1} H^\coordrel \sI^{-1},
\end{align*}
where
\begin{align}
  \label{eqn:variance_coordrel}
  H^\coordrel &= \frac{d q_\alpha (1 - q_\alpha) } { (2 q_\alpha - 1)^2} I  + \E[ d \diag(\phi(y))^2 - \phi(y)\vecsquare],
\end{align}
\end{theorem}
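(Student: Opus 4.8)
The plan is to apply the general variance formula from Proposition~\ref{prop:variance}, namely $\vMomML = \sI^{-1} + \sI^{-1} \E[\cov[\beta(o) \mid y]] \sI^{-1}$, so that the entire task reduces to computing $H^\coordrel = \E[\cov[\beta_\coordrel(o) \mid y]]$ for the coordinate-release observation function. This mirrors exactly the structure of the randomized-response calculation in Lemma~\ref{lem:variancerr}, where the work was entirely in evaluating the conditional covariance of $\beta$. So first I would recall the definitions: $\beta_\coordrel(x,(j,o_\coordrel)) = p_\coordrel^{-1}(j)\,\frac{o_\coordrel - 1 + q_\alpha}{2q_\alpha - 1}\, c\, e_j$, and in the unconditional $\phi \in \{0,1\}^d$ case we have $c = 1$ and $p_\coordrel(j) = 1/d$ uniform, so $\beta_\coordrel = d\,\frac{o_\coordrel - 1 + q_\alpha}{2q_\alpha - 1}\, e_j$.

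The main computation is a two-stage conditional covariance decomposition, exactly as in the proof of Lemma~\ref{lem:variancerr}. Conditioning on $y$, there are two layers of randomness: the choice of coordinate $j \sim \mathrm{Unif}[d]$, and the bit flip producing $o_\coordrel$ from $\tilde o[j] = \phi(y)[j]$ (since $\phi$ is already binary, $\tilde o[j] = \phi(y)[j]$ deterministically). I would write $\cov[\beta_\coordrel \mid y] = \E[\cov[\beta_\coordrel \mid y, j] \mid y] + \cov[\E[\beta_\coordrel \mid y, j] \mid y]$. For the inner conditional variance given $(y,j)$, only the single coordinate $j$ is active, $o_\coordrel$ is a flipped Bernoulli with flip probability $1-q_\alpha$, and a direct calculation shows $\E[\beta_\coordrel \mid y, j] = d\,\phi(y)[j]\, e_j$ is unbiased for the scaled basis vector while the flip contributes variance $\frac{q_\alpha(1-q_\alpha)}{(2q_\alpha-1)^2}$ scaled by $d^2$ in coordinate $j$. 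Averaging the inner term over $j \sim \mathrm{Unif}[d]$ produces the isotropic piece $\frac{d\, q_\alpha(1-q_\alpha)}{(2q_\alpha-1)^2} I$, the first term of $H^\coordrel$.

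For the outer term, I would compute $\cov_j[\E[\beta_\coordrel \mid y, j]] = \cov_j[d\,\phi(y)[j]\,e_j]$ with $j$ uniform. Since the conditioning is on $y$ and only $j$ is random here, this is the covariance of the random vector that equals $d\,\phi(y)[j]$ in the single slot $j$; its second moment is $\E_j[d^2\phi(y)[j]^2\, e_j e_j^\top] = d\,\diag(\phi(y))^2$ (using $\E_j[e_j e_j^\top] = \frac1d I$ diagonally), and subtracting the outer product of the mean $\E_j[d\,\phi(y)[j]\,e_j] = \phi(y)$ gives $d\,\diag(\phi(y))^2 - \phi(y)\vecsquare$. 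Finally I would take the outer expectation over $y \sim p_{\theta^*}$ to land the second term $\E[d\,\diag(\phi(y))^2 - \phi(y)\vecsquare]$, completing the claimed formula for $H^\coordrel$.

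The step I expect to be the main obstacle is bookkeeping the coordinate-sampling randomness cleanly, because $j$ and the bit flip enter $\beta_\coordrel$ multiplicatively through the $e_j$ factor, and one must be careful that the mean of $\beta_\coordrel$ given $y$ equals $\phi(y)$ (as guaranteed by \refeqn{betaGivesMu}) so that the variance decomposition's cross terms vanish and the two pieces separate exactly as stated. The identity $\E_j[\phi(y)[j]^2\, e_j e_j^\top] = \frac1d \diag(\phi(y))^2$ versus $\E_j[\phi(y)[j]\, e_j] = \frac1d \phi(y)$ is where the factor of $d$ and the gap between $\diag(\phi(y))^2$ and $\phi(y)\vecsquare$ genuinely come from, and I would verify it carefully since it is the source of the nontrivial second term distinguishing coordinate release from the per-value scheme.
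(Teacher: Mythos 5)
Your proposal is correct and follows essentially the same route as the paper's own proof: the law-of-total-covariance decomposition $\cov[\beta_\coordrel \mid y] = \E[\cov[\beta_\coordrel \mid y, j] \mid y] + \cov[\E[\beta_\coordrel \mid y, j] \mid y]$, with the flip noise yielding the isotropic term $\frac{d\,q_\alpha(1-q_\alpha)}{(2q_\alpha-1)^2} I$ and the coordinate-sampling randomness yielding $d\,\diag(\phi(y))^2 - \phi(y)\vecsquare$, exactly as in the appendix. Your key identities, including $\E[\beta_\coordrel \mid y, j] = d\,\phi(y)[j]\,e_j$ and the second-moment computation $\E_j[d^2\phi(y)[j]^2 e_j e_j^\top] = d\,\diag(\phi(y))^2$, match the paper's calculation step for step.
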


As in Lemma~\ref{lem:variancerr}, the matrix $H^\coordrel$ governs the
loss in efficiency under the coordinate release mechanism, which
arises from two sources: (i) variance due to the stochastic flipping
process and (ii) variance due to choosing a random coordinate for
release.

\begin{proof}
When $p_\coordrel$ is uniform, the observation function $\beta_{\coordrel}{(j, o_\coordrel)}$ takes the following form. 
\begin{align*}
\beta_{\coordrel}(j, o_\coordrel) =
d \;
\frac { o_\coordrel - 1 + q_{\diffp} } { 2 q_{\diffp} - 1 }  \;
e_j.
\end{align*}
From \refeqn{momVariance}, we have that $H^\coordrel = \E [ \cov[ \beta_\coordrel(j, o_\coordrel) \mid y] ]$.

We decompose $\cov[\beta(j, o_\coordrel) \mid y]$ as 
\begin{align*}
  &\E [\cov[\beta(j, o_\coordrel) \mid j, y] \mid y] + \cov [ \E[ \beta(j, o_\coordrel) \mid j, y ] \mid y] \\
  & = \frac{d^2} { (2 q_\alpha - 1)^2 } \big( \E [\cov[o_\coordrel ~e_j \mid j, y] \mid y] + \cov [ \E[  (o_\coordrel - 1 + q_\alpha) ~e_j \mid j, y ] \mid y] \big) \\
  & = \frac{d^2} { (2 q_\alpha - 1)^2 }  \E [ \diag ( q_\alpha (1 - q_\alpha) e_j) \mid y]  + \frac{d^2} { (2 q_\alpha - 1)^2 } \cov [ [(2 q_\alpha - 1)\phi(y)[j] + 1 - q_\alpha - 1 + q_\alpha]e_j \mid y]  \\
  & = \frac{d^2} { (2 q_\alpha - 1)^2 }  \E [ \diag ( q_\alpha (1 - q_\alpha) e_j) \mid y]  + d^2 \cov [ \phi(y)[j] e_j \mid y] \\
  &= \frac{d q_\alpha (1 - q_\alpha) } { (2 q_\alpha - 1)^2} I  +  [ d \diag(\phi(y))^2 - \phi(y)\vecsquare], 
\end{align*}
\end{proof}

\begin{theorem}[Asymptotic variance (per-value $\phi$-RR)]
The asymptotic variance of $\MomML$ for $\alpha$-differentially per-value $\phi$- RR scheme is
\begin{align}
  \vMomML^\pervalue &= \sI^{-1} + \sI^{-1} H^\pervalue \sI^{-1},  \\
  H^\pervalue &= \frac{ q_{\diffp/\bar\diameter} ( 1 -  q_{\diffp/\bar\diameter})} { (2  q_{\diffp/\bar\diameter} - 1)^2} I.   \label{eqn:variance_pervalue}
\end{align}
\end{theorem}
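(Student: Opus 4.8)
The plan is to invoke the general formula \refeqn{momVariance}, under which the per-value scheme's asymptotic variance is governed entirely by $H^\pervalue = \E[\cov[\beta_\pervalue(o_\pervalue) \mid y]]$, where the expectation is over $y$. The whole task therefore reduces to computing one conditional covariance and taking its expectation. Writing $q \eqdef q_{\diffp/\bar\diameter}$ and specializing to $\phi \in \{0,1\}^d$ (so $c = 1$), the observation function $\beta_\pervalue(o_\pervalue) = (o_\pervalue - \mathbf{1} + q\mathbf{1})/(2q - 1)$ is an affine map of $o_\pervalue$. Since affine shifts drop out of a covariance and scalar factors pull out squared,
\begin{align*}
  \cov[\beta_\pervalue(o_\pervalue) \mid y] = \frac{1}{(2q-1)^2}\,\cov[o_\pervalue \mid y],
\end{align*}
so I only need the conditional covariance of the raw released vector.

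Next I would exploit the two structural features that distinguish this scheme from coordinate release. First, because $\phi$ is binary and $c = 1$, the Bernoulli binarization is degenerate: $\tilde o[j] = \phi(y)[j]$ almost surely, so the binarization step injects no variance of its own. Second, the per-value scheme flips each coordinate independently and releases all of them (no random coordinate is selected). Hence, conditioned on $y$, the entries $o_\pervalue[j]$ are mutually independent, and each is a single flipped bit equal to $\phi(y)[j]$ with probability $q$ and $1 - \phi(y)[j]$ with probability $1 - q$. A one-line check shows $o_\pervalue[j]$ is a Bernoulli variable with variance $q(1-q)$, and crucially this value is identical whether the underlying bit $\phi(y)[j]$ is $0$ or $1$. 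Independence across coordinates makes $\cov[o_\pervalue \mid y]$ diagonal, and the common per-coordinate variance makes it a scalar multiple of the identity:
\begin{align*}
  \cov[o_\pervalue \mid y] = q(1-q)\, I.
\end{align*}

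The payoff is that this conditional covariance does not depend on $y$, so the outer expectation over $y$ in $H^\pervalue = \E[\cov[\beta_\pervalue(o_\pervalue) \mid y]]$ is the expectation of a constant. Combining the two displays yields $H^\pervalue = q(1-q)/(2q-1)^2 \cdot I$, which is exactly \refeqn{variance_pervalue}. There is no real analytic obstacle here; the one point worth stating carefully is the \emph{absence} of any $y$-dependent term, and why it is absent. This is precisely where per-value $\phi$-RR differs from the coordinate-release theorem above, whose $H^\coordrel$ carried an additional $\E[d\,\diag(\phi(y))^2 - \phi(y)\vecsquare]$ arising from the random choice of which coordinate to reveal. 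In the per-value scheme every coordinate is revealed, so that source of variability vanishes and $H^\pervalue$ collapses to a multiple of the identity.
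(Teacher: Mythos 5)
Your proposal is correct and follows essentially the same route as the paper's proof: invoke the general formula \refeqn{momVariance}, pull the affine map $\beta_\pervalue$ out of the conditional covariance as a factor of $(2q-1)^{-2}$, and observe that each released coordinate is an independent Bernoulli flip whose variance is $q(1-q)$ regardless of the underlying bit, so $\cov[o_\pervalue \mid y] = q(1-q)I$ with no $y$-dependence. Your explicit remark that the binarization step is degenerate under the standing assumption $\phi \in \{0,1\}^d$ (and hence contributes no extra variance term) is a detail the paper leaves implicit, but it is the same argument.
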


\begin{proof}
From \refeqn{pervalue-beta-construction}, we have
\begin{align*}
  \beta_{\pervalue}(x, o_\pervalue) \eqdef
  \frac { o_\pervalue - \mathbf{1} + q_{\diffp/\bar\diameter} \mathbf{1} } { 2 q_{\diffp/\bar\diameter} - 1 } \;. 
\end{align*} 

From \refeqn{momVariance}, we know that $H^\pervalue = \E [ \cov[ \beta_\pervalue(o_\pervalue) \mid y] ] = \frac {1}{(2q_{\diffp/\bar\diameter} - 1)^2 } \E [ \cov[ o_\pervalue \mid y] ]$. 

Each entry $o_\pervalue[j]$ is chosen independently according to 
a Bernoulli distribution with parameter $q_{\diffp/\bar\diameter}$ (if $\tilde{o}[i] = 1$) or $1 - q_{\diffp/\bar\diameter}$ (if $\tilde{o}[i] = 0$), 
implying the claim.
\end{proof}

\subsection{Comparison of the two schemes}
\label{sec:comparison}
We use $\tr(H^\coordrel)$ and $\tr(H^\pervalue)$ to quantitatively estimate the loss 
in efficiency of $\vMomML$ under the two privacy schemes. 

For small $x$, approximate $q_x = \frac {e^{x/2}} {1 + e^{x/2}}$ locally as $\frac{1}{2} + \frac{x}{8}$ (via Taylor expansion).
Substituting in~\refeqns{variance_coordrel}{variance_pervalue} gives the following
expressions for small $\alpha$:
\begin{align} 
\tr(H^\coordrel) &\approx \frac{4 d^2}{ \alpha^2} + \bar\diameter (d - 1) . \\
\tr(H^\pervalue) &\approx \frac{4 d \bar\diameter^2}{\alpha^2}.
\end{align}

When $\bar\diameter$ is constant (independent of d),
$\tr(H^\pervalue)$ grows linearly with $d$ whereas $\tr(H^\coordrel)$
grows quadratically with $d$. Therefore per-value $\phi$-RR has
smaller loss when $\phi$ has low $l_1$ norm. Meanwhile, when
$\bar\diameter = O(d)$, $\tr(H^\pervalue) = O(d^3)$ and
$\tr(H^\coordrel) = O(d^2)$. Hence coordinate release is a more
appealing choice if $\phi$ is dense.

\end{document}